\newif\ifisit
\newif\ifmylinenumbers
\newcommand*\patchAmsMathEnvironmentForLineno[1]{%
   \expandafter\let\csname old#1\expandafter\endcsname\csname #1\endcsname
   \expandafter\let\csname oldend#1\expandafter\endcsname\csname end#1\endcsname
   \renewenvironment{#1}%
      {\linenomath\csname old#1\endcsname}%
      {\csname oldend#1\endcsname\endlinenomath}}%
\newcommand*\patchBothAmsMathEnvironmentsForLineno[1]{%
   \patchAmsMathEnvironmentForLineno{#1}%
   \patchAmsMathEnvironmentForLineno{#1*}}%
\newcounter{EditBlock}
\newcounter{CondEditBlock}
\newcommand{\editstart}{\renewcommand{\linenumberfont}{\normalfont\sffamily\footnotesize\bf\color{blue}}\stepcounter{EditBlock}\linelabel{L:editstart_\theEditBlock}}
\newcommand{\editfinish}{\renewcommand{\linenumberfont}{\normalfont\sffamily\tiny\color{black}}\linelabel{L:editfinish_\theEditBlock}}
\newcommand{\editstart}{}
\newcommand{\editfinish}{}
\newcommand{\ev}[1]{\mathbb{E} \left [ #1 \right ] }
\newcommand{\evwrt}[2]{\mathbb{E}_{#1} \left [ #2 \right ] }
\newcommand{\pr}[1]{\mathbb{P} \left ( #1 \right ) }
\newcommand{\snorm}[1]{\Vert #1 \Vert}
\newcommand{\one}[1]{\mathbbm{1} \left [ #1 \right ]}
\newcommand{\reals}{\mathbb{R}}
\newtheorem{lem}{Lemma}
\newtheorem*{lem*}{Lemma}
\newtheorem{thm}{Theorem}
\newtheorem{rem}{Remark}
\newtheorem{assumption}{Assumption}
\newcommand{\tsum}{\mathsf{TSum}}
\newcommand{\vx}{\vec{x}}
\newcommand{\vu}{\vec{u}}
\newcommand{\vw}{\vec{w}}
\newcommand{\vxp}{\vec{x}'}
\let\vec\bm
\newcommand{\xp}{x'}
\newcommand{\mN}{\mathcal{N}}
\newcommand{\mI}{\mathcal{I}}
\newcommand{\mB}{\mathcal{B}}
\newcommand{\mC}{\mathcal{C}}
\newcommand{\roberr}{\mathcal{L}}
\newcommand{\roberrd}{\mathcal{L}^{(d)}} 
\newcommand{\optroberrd}{{\mathcal{L}^*}^{(d)}}
\newcommand{\optroberr}{{\mathcal{L}^*}}
\algnewcommand\algorithmicinput{\textbf{Input:}}     %
\algnewcommand\INPUT{\item[\algorithmicinput]}       %
\algnewcommand\algorithmicoutput{\textbf{Output:}}   %
\algnewcommand\OUTPUT{\item[\algorithmicoutput]}     %
\algrenewcommand\algorithmicrequire{\textbf{Input:}} %
\algrenewcommand\algorithmicensure{\textbf{Output:}} %
\newcommand{\nd}{q} 
\newcommand{\ndp}{q'} 
\newcommand{\ndd}{q^{(d)}}
\newcommand{\xd}{x^{(d)}}
\newcommand{\vxd}{\vec{x}^{(d)}}
\newcommand{\xpd}{{x'}^{(d)}}
\newcommand{\vxpd}{{\vxp}^{(d)}}
\newcommand{\txd}{\widetilde{x}^{(d)}}
\newcommand{\tx}{\widetilde{x}}
\newcommand{\txpd}{{\widetilde{x}}^{\prime (d)}}
\newcommand{\vtxpd}{{\widetilde{\vec{x}}}^{\prime (d)}}
\newcommand{\vtxd}{\widetilde{\vec{x}}^{(d)}}
\newcommand{\mCd}{{\mathcal{C}}^{(d)}}
\newcommand{\convdist}[1]{\xrightarrow[#1]{\text{dist}}}
\newcommand{\convprob}[1]{\xrightarrow[#1]{\text{prob}}}
\newcommand{\vwd}{\vec{w}^{(d)}}
\newcommand{\wdd}{w^{(d)}}
\newcommand{\tp}{\widetilde{p}}
\newcommand{\dtv}{d_{\text{TV}}}
\begin{document}

\title{Binary Classification Under $\ell_0$ Attacks for General Noise Distribution}

\author{Payam Delgosha\thanks{Department of Computer Science, University of
    Illinois at Urbana-Champaign, IL, \texttt{delgosha@illinois.edu}}
  \qquad Hamed Hassani\thanks{Department of Electrical and Systems Engineering,
    University of Pennsylvania, Philadelphia, PA, \texttt{hassani@seas.upenn.edu}}
  \qquad Ramtin Pedarsani\thanks{Department of Electrical and Computer
    Engineering, University of California, Santa Barbara, Santa Barbara, CA, \texttt{ramtin@ece.ucsb.edu}}}

\maketitle

\begin{abstract}
Adversarial examples have recently drawn considerable attention in
the field of machine learning due to the fact that small perturbations in the
data can result in major performance degradation. This
phenomenon is usually modeled by  a malicious adversary that can
apply  perturbations to
the data in a constrained fashion, such as being bounded in a certain norm. In
this paper, we study this problem when the adversary is constrained by the
$\ell_0$ norm; i.e., it can perturb a certain number of coordinates in the input,
but has no limit on how much it can perturb those coordinates. Due to the
combinatorial nature of this setting, we need to go beyond the standard techniques
in robust machine learning to address this problem. We consider a binary
classification scenario where $d$ noisy data samples of the true label are provided to
us after adversarial perturbations.
We introduce a  classification
method which employs a nonlinear component called truncation, and show
in an asymptotic scenario, as long as the adversary is restricted to perturb no
more than $\sqrt{d}$ data samples, we can almost achieve the optimal classification error
in the absence of the adversary, i.e.\ we can completely neutralize adversary's
effect. Surprisingly, we observe a phase transition in the sense that using a
converse argument, we show that if the adversary can perturb more than
$\sqrt{d}$ coordinates, no classifier can do better than a random guess.
\end{abstract}


\section{Introduction}

It is well-known that machine learning models are susceptible to adversarial attacks that can cause classification error. These attacks are typically in the form of a small norm-bounded perturbation to the input data that are carefully designed to incur misclassification  -- e.g. they can be form of an additive $\ell_p$-bounded perturbation for some $p\geq 0$ \cite{biggio2013evasion,szegedy,goodfellow2014explaining,carlini2017,Madry_ICLR}.

There is an extensive body of prior work studying adversarial machine learning, most of which have focused on $\ell_2$ and $\ell_\infty$ attacks \cite{carlini2018,marzi, bhattacharjee2021consistent, bhattacharjee2020sample}. 
To train models that are more robust against such attacks, adversarial training
is the state-of-the-art defense method. However, the success of the current
adversarial training methods is mainly based on empirical  evaluations \cite{Madry_ICLR}.  It is therefore imperative to study the fundamental limits of robust machine learning under different classification settings and attack models.

In this paper, we focus on the important case of $\ell_0$-bounded attacks that
has been less investigated so far. In such attacks,  given an $\ell_0$ budget $k$, an adversary can
change $k$ entries of the input vector in an arbitrary fashion -- i.e. the adversarial perturbations belong to the so-called $\ell_0$ ball of radius $k$. In contrast with
$\ell_p$-balls \textcolor{black}{($p \geq 1$)}, the  $\ell_0$-ball
is non-convex and non-smooth.
Moreover, the $\ell_0$ ball
contains inherent discrete (combinatorial) structures that can be exploited by
both the learner and the adversary. As a result, the $\ell_0$-adversarial
setting bears various challenges that are absent in common $\ell_p$-adversarial settings. In thus regard, it has recently been shown that any piece-wise linear classifier, e.g. a
feed-forward deep neural  network with ReLu activations, completely fails in the
$\ell_0$ setting \cite{shamir2019simple}.

Perturbing only a few components of the data or signal has many real-world  applications
{\color{black}including} natural
language processing~\cite{jin2019bert},  malware
detection~\cite{grosse2016adversarial}, and physical attacks in object  detection~\cite{li2019adversarial}. 
There have been several prior works on $\ell_0$-adversarial attacks including white-box attacks that are gradient-based,
e.g.~\cite{carlini2017,papernot2016limitations,modas2019sparsefool}, and black-box attacks 
based on zeroth-order optimization,
e.g.~\cite{schott2018towards,croce2020sparse}. Defense strategies against
$\ell_0$-bounded attacks have also been proposed, e.g. defenses based on
randomized ablation~\cite{levine2020robustness} and defensive
distillation~\cite{papernot2016distillation}. None of the above works have studied the fundamental limits of the $\ell_0$-adversarial setting theoretically. In our prior work, we have studied the $\ell_0$-adversarial setting for the case of Gaussian mixture model \cite{delgosha2021robust}. In this paper, we generalize our results to the case of binary classification with general noise distribution.

The goal of this paper is to characterize
the optimal classifier and the corresponding robust classification error as a function of the adversary's budget
$k$. More precisely, we focus on the binary classification setting with general but i.i.d.\ noise distributions, where the input is generated according to the following model: $x_i = y \mu + z_i$, where $y \in \{ -1 ,1 \}$ is the true label, $z_i$ is a zero-mean i.i.d.\ random noise process, and $\mu$ is its mean vector. We seek to find the robust classification error of the optimal classifier in this setting. In other words, we would like to study ``how robust'' we can design a
classifier given a certain budget for an $\ell_0$ adversary. Specifically, we consider the asymptotic regime that the dimension of the input gets large, and ask the following fundamental question: What is the maximum adversary's budget for which the optimal error in the absence of an adversary (standard error) can still be achieved and how does this limit scale with the input's dimension? 

The main contributions of the paper to answer the above questions are as follows. 

\begin{itemize}
    \item We prove an achievability result by introducing a classifier and
      characterizing its performance.
        Our proposed classification method finds the likelihood of
        each data sample, and applies \emph{truncation} by removing a few of the largest
        and a few of the smallest values. This truncation phase effectively removes the
        ``outliers'' present in the input due to adversarial modification. We
        have shown in a previous work \cite{delgosha2021robust} that truncation is effective to
        robustify against $\ell_0$ attacks in a
        Gaussian         mixture  setting. The present work shows the effectiveness of
        this method in a much broader setting for general noise distributions.
\item We prove a converse result by finding a lower bound  on the  optimal
robust error, and show that the two bounds asymptotically match as the  dimension $d
\rightarrow \infty$, hence our proposed classification method is  optimally
robust against such adversarial attacks.
  The key idea behind the converse proof is to use techniques from
  the optimal transport theory and studying the asymptotic behavior of the
  maximal coupling between the data distribution under the two labels $+1$ and
  $-1$. We use such a coupling to design a strategy for the adversary by making
the distribution ``look almost the same'' under the two labels, hence removing
the information about the true label.
\item Surprisingly, we observe a phase
transition for the optimal robust error in terms of the adversary's budget.
Roughly speaking,  we observe that if the adversary's budget is below
$\sqrt{d}$, we can asymptotically achieve the optimal standard error
which corresponds to the case where there is no adversary, while if the adversary's budget is above $\sqrt{d}$, no classifier
can do better than a random guess. In other words, we can totally  compensate
for the presence of the adversary as  long as its budget is below $\sqrt{d}$ and
achieve a performance \emph{as if there were no adversary}. On the other hand, above this
threshold $\sqrt{d}$, the adversary can perturb the data in such a way that the
information about the true label is  lost and hence no classifier can do better
than a random guess. Consequently,  \emph{there is no trade-off between
  robustness and accuracy in this setting}.
    
\end{itemize}

\ifisit
In Section~\ref{sec:problem-formulation}, we give the problem formulation, 
 in Section~\ref{sec:main-results} we discuss the main results, and in
 Section~\ref{sec:conclusion} we conclude the paper. Proof ideas are discussed
 in the appendices, and the full proofs are given in \cite{isitfull}. 
\fi

We close
 this section by introducing some notation. 
We denote the set of integers $\{1, \dots, n\}$ by $[n]$. $\bar{\Phi}(x):=
\frac{1}{\sqrt{2\pi}}\int_x^\infty \exp(-t^2/2)dt$ denotes the complementary CDF
of a standard normal distribution. $\mN(\mu, \sigma^2)$ denotes a 
real-valued normal distribution with mean $\mu$ and variance $\sigma^2$.
$\convdist{}$ and $\convprob{}$ denote convergence in distribution and convergence
in probability, respectively. $X \sim p(.)$ means that the random variable $X$
has distribution $p(.)$. We use the boldface notation for vectors in the
Euclidean space, e.g.\ $\vx \in \reals^d$.



\section{Problem Formulation}
\label{sec:problem-formulation}

We consider the binary classification setting where the true label is $Y \sim
\text{Unif}\{\pm 1\}$ and conditioned on a realization $y$, $d$ independent real-valued data samples $\xd_1,
\dots, \xd_d$ are generated such that  $\xd_i = y\mu_d + z_i$.
Here, $\mu_d \in \reals$ is the conditional expectation of $\xd_i$ given $y=1$ and
$z_1, \dots, z_d$ are i.i.d.\ samples of a zero-mean real-valued noise
distribution which has a 
density $\nd(.)$. We  consider a high-dimensional setting where the dimension $d
\rightarrow \infty$, and $\mu_d$ can depend on the data dimension $d$. However,
we assume that 
the noise density $\nd(.)$ is fixed and known. Note that since the $\ell_0$ norm is invariant
  under scalar multiplication, we can arbitrarily normalize the quantities, and
   this assumption is made without loss of generality. 
   We denote the vector of the input data samples by $\vxd = (\xd_i: i \in [d])$.
Throughout this paper, the superscript $(d)$ emphasizes the dependence on the
dimension $d$. However, we may drop it from the notations whenever the dimension
is clear from the context.
A classifier is a measurable function $\mC: \vx \mapsto \{\pm 1\}$ which predicts the true
label from the input $\vx$. We consider the 0-1 loss $\ell(\mC; \vx, y) :=
\one{\mC(\vx) \neq y}$ as a metric for discrepancy between the prediction of the
classifier on the input $\vx$ and the true label $y$.

We assume that an adversary is allowed to perturb the input $\vx$ within the
$\ell_0$ ball of radius $k$: 
\begin{equation*}
  \mB_0(\vxd, k) := \{\vxpd \in \reals^d: \snorm{\vxd - \vxpd}_0 \leq k\},
\end{equation*}
where $\snorm{\vxd}_0 := \sum_{i=1}^d \one{\xd_i \neq 0}$. Effectively, the
adversary can change at most $k$ data samples. The parameter $k$ is called the
\emph{adversary's budget}. 
Similar to the above, whenever the dimension $d$ is clear from the context, we
may denote the adversary's perturbed data samples as $\vxp = (\xp_i: i \in [d])$.
In this setting, the \emph{robust classification error} (or
\emph{robust error} for short) associated to a classifier $\mC$ is defined to be 
\begin{equation}
  \label{eq:robust-error-def}
  \roberrd_{\mu_d, \nd}(\mC, k) := \ev{\max_{\vxp \in \mB_0(\vx, k)} \ell(\mC; \vxp, y)},
\end{equation}
where the expectation is taken with respect to the above mentioned distribution
parametrized by $d, \mu_d$, and $\nd$.
 The \emph{optimal robust classification error} (or \emph{optimal robust error} for
short) is defined by optimizing the robust error over all possible (measurable) classifiers:
\begin{equation}
  \label{eq:optimal-robust-error-def}
  \optroberrd_{\mu_d, \nd}(k) := \inf_{\mC} \roberrd_{\mu_d, \nd}(\mC, k).
\end{equation}
In words, $\optroberrd_{\mu_d, \nd}(k)$ is the minimum error that any
classifier can achieve in the presence of an adversary with an $\ell_0$ budget
$k$. In other words, no classifier can obtain a robust error smaller than
$\optroberrd_{\mu_d, \nd}(k)$ in this setting. 
Whenever the problem parameters are clear from the context,
we may drop them from the notation and write $\roberrd(\mC, k)$ or $\roberr(\mC,
k)$, and $\optroberrd(k)$ or $\optroberr(k)$.

In the absence of the adversary, or equivalently  when $k=0$, $\optroberr(0)$ reduces to
the \emph{optimal standard error}, which is optimal Bayes error of estimating
$Y$ upon observing the noisy samples $x_1, \dots, x_d$. In order to fix the
baseline, specifically to have a meaningful asymptotic discussion as $d
\rightarrow \infty$, we assume that $\mu_d$ is such that the optimal standard
error $\optroberrd_{\mu_d, \nd}(0)$ remains constant as $d \rightarrow
\infty$. As we will see later (see Theorem~\ref{thm:std-err-sqrt-d} in Section~\ref{sec:asymp-std-err}),  this is achieved
when $\mu_d = c/\sqrt{d}$ for some $c >0$. Motivated by this, we study the setting where
$\mu_d = c / \sqrt{d}$ for some constant $c > 0$ throughout this paper. 
When $\mu_d = c / \sqrt{d}$ when $c < 0$, similar results still hold after
substituting $c$ with $|c|$.

\vspace{2mm}

\section{Main Results}
\label{sec:main-results}

In order to prove our main results, we need the following assumptions on the
noise distribution $\nd(.)$. We will show later (see Section~\ref{sec:exp-family}) that all of these
assumptions are satisfied for a large class of distributions, including the exponential
family of distributions with polynomial exponents, e.g.\ the normal
distribution. 

\begin{assumption}
  \label{assumption:Fisher}
  We have $\nd(z) > 0$ for all $z \in \reals$,  $ \nd(.)$ is three times
  continuously differentiable, and
  \begin{equation}
\label{eq:int-qp-qpp-zero}
    \int_{-\infty}^\infty \nd'(z) dz = \int_{-\infty}^\infty \nd''(z) dz = 0,
  \end{equation}
 where $\nd'(.)$ and $\nd''(.)$ denote the first and second derivatives of $\nd(.)$.
  Furthermore, the location family of distributions
  \begin{equation}
    \label{eq:q-z-t-def}
    \nd(z;\theta):= \nd(z-\theta),
  \end{equation}
  parameterized by $\theta \in \reals$ has well-defined and finite Fisher information
  $\{\mI_{\nd}(\theta)\}_{\theta \in \reals}$.
\end{assumption}

The Fisher information of the parametric family of distributions $\nd(z;\theta)$
where $z,\theta \in \reals$ is defined to be
\begin{equation*}
  \mI_{\nd}(\theta):= \int \left( \frac{\partial}{\partial \theta} \log \nd(z;\theta) \right)^2 \nd(z;\theta)dz.
\end{equation*}
See, for instance, \cite{lehmann2006theory} for more details.
Since $\nd(z;\theta) = \nd(z - \theta)$ is a location family, it turns out that
$\mI_{\nd}(\theta)$ is independent of $\theta$. The common value, which we
denote by $\mI_{\nd}$ by an abuse of notation, is given by
\begin{equation}
\label{eq:Iq-integral-qp2-a}
  \mI_{\nd} := \int_{-\infty}^\infty \frac{(\nd'(z))^2}{\nd(z)} dz.
\end{equation}

\begin{assumption}
  \label{assumption:zeta-d3}
  There exists $\zeta > 0$ such that
  \begin{equation}
    \label{eq:zeta-d3}
    \evwrt{Z \sim \nd(.)} { \sup_{t \in [Z-\zeta, Z+\zeta]} \left |\frac{d^3}{dt^3} \log \nd(t) \right | } < \infty.
  \end{equation}
\end{assumption}

\begin{assumption}
  \label{assumption:zeta-e-d2}
  There exist $\zeta > 0$ such that
  \begin{equation}
    \label{eq:zeta-e-d2}
    \evwrt{Z \sim \nd(.)}{\sup_{t \in [Z-\zeta, Z+\zeta]} \left|\frac{d^2}{dt^2} \log \nd(t) \right|^{2}} < \infty.
  \end{equation}
\end{assumption}

\begin{assumption}
  \label{assumption:max-log-q-log}
There exist constants $\gamma >0 $ and  $C_4 > 0$ such that
  \begin{equation*}
     \lim_{d \rightarrow \infty}\pr{\max_{1 \leq i \leq d} \left | \frac{d}{d z} \log \nd(Z_i) \right | >  C_4 (\log d)^{\gamma}} = 0,
   \end{equation*}
   where $Z_i$ are i.i.d.\ with distribution $\nd(.)$.
\end{assumption}

The following theorem formalizes the phase transition we discussed previously,
i.e.\ if adversary's budget is orderwise below $\sqrt{d}$, we can totally compensate
for its presence, while if adversary's budget is orderwise above $\sqrt{d}$, no classifier
can do better than a random guess. As we discusses previously, we assume that
$\mu_d = c/ \sqrt{d}$ for a constant $c > 0$ to ensure that the standard error
is asymptotically constant (see Theorem~\ref{thm:std-err-sqrt-d} in Section~\ref{sec:asymp-std-err}).

\begin{thm}
  \label{thm:main-phase-transition}
  Assume that $\mu_d = c / \sqrt{d}$ for some constant $c > 0$, and the
  assumptions~\ref{assumption:Fisher}-\ref{assumption:max-log-q-log} are
  satisfied for the noise density $\nd(.)$. Then, if $k_d$ is a sequence of
  adversary's $\ell_0$ budget, then we have
  \begin{enumerate}
  \item If $\limsup_{d \rightarrow \infty} \log_d k_d < 1/2$, there exists a
    sequence of classifiers $\mCd_{k_d}$ such that 
    \begin{equation*}
      \limsup_{d \rightarrow \infty} \roberrd_{\mu_d, \nd}(\mCd_{k_d}, k_d) - \optroberrd_{\mu_d, \nd}(0) = 0.
    \end{equation*}
    In other words, the excess risk of this sequence of classifiers as compared
    to the optimal standard error (when there is no adversary) converges to
    zero.
  \item If $\liminf_{d \rightarrow \infty} \log_d k_d > 1/2$, we have
    \begin{equation*}
      \liminf_{d \rightarrow \infty} \optroberrd_{\mu_d, \nd}(k_d) \geq 1/2.
    \end{equation*}
    In other words, no classifier can asymptotically do better than a random
    guess. 
  \end{enumerate}
\end{thm}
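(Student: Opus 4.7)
The theorem is a matched achievability/converse pair at the critical scale $k = \sqrt d$. My plan is to prove (1) by exhibiting an explicit \emph{truncated-mean} classifier whose error matches the Bayes optimum up to $o(1)$, and to prove (2) via a coupling-based adversary that, once its budget exceeds $\sqrt d$, transports $P_+^{\otimes d}$ onto $P_-^{\otimes d}$ inside the $\ell_0$-ball and thereby erases the label, where $P_\pm := \nd(\cdot \mp \mu_d)$.

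\textbf{Achievability.} For part (1), set $t_i := \log \nd(x_i + \mu_d) - \log \nd(x_i - \mu_d)$. Under Assumptions~\ref{assumption:Fisher}--\ref{assumption:zeta-d3} a Taylor expansion yields $t_i = -2\mu_d \nd'(x_i)/\nd(x_i) + O(\mu_d^3 R(x_i))$ with $R$ bounding the third derivative of $\log \nd$; using $\int \nd' = 0$ one obtains $\ev{t_i \mid y = +1} \sim 2\mu_d^2 \mI_\nd$ and $\var(t_i) \sim 4\mu_d^2 \mI_\nd$. An i.i.d.\ CLT on $\sum_i t_i$ gives the Bayes error $\bar\Phi(c\sqrt{\mI_\nd})$, which I would identify with $\optroberrd(0)$. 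I would define $\mCd_{k_d}$ to discard the $k_d$ largest and $k_d$ smallest $t_i$'s and output the sign of the trimmed sum. Since an adversary with budget $k_d$ can alter at most $k_d$ coordinates, the trimmed statistic coincides with a trimmed sum over $d - 2k_d$ \emph{clean} coordinates (the adversarial entries are absorbed by the trimming, and at most $k_d$ clean entries are additionally removed). Assumption~\ref{assumption:max-log-q-log} bounds $\max_i |\nd'(x_i)/\nd(x_i)| = O((\log d)^\gamma)$ with high probability, so each dropped clean term has magnitude $O(\mu_d (\log d)^\gamma)$ and the total trimming bias is $O(c k_d (\log d)^\gamma / \sqrt d) = o(1)$ whenever $\limsup \log_d k_d < 1/2$; this is negligible against the constant-order signal $2c^2 \mI_\nd$. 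Assumption~\ref{assumption:zeta-e-d2} supplies the second-moment control to upgrade this to a CLT on the trimmed sum, yielding $\roberrd(\mCd_{k_d}, k_d) \to \optroberrd(0)$.

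\textbf{Converse.} For part (2), a first-order expansion gives $\dtv(P_+, P_-) = \mu_d \|\nd'\|_1 + O(\mu_d^3)$. Let $\pi$ be the coordinate-wise product of maximal couplings of $P_+$ and $P_-$, and let $N$ count the coordinates where the coupled pair $(\vec X, \vec X')$ disagrees. Then $\ev N \sim c \|\nd'\|_1 \sqrt d$ and $\var N \leq \ev N$, so Chebyshev yields $\pr{N > k_d} \to 0$ whenever $\liminf \log_d k_d > 1/2$. I would construct the adversary as follows: under $y = +1$, sample $\vec X' \sim \pi(\cdot \mid \vec X)$ and output $\vec X'$ if $\snorm{\vec X - \vec X'}_0 \leq k_d$, else output $\vec X$; under $y = -1$, act as the identity. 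Let $W$ denote the post-attack law under $y = +1$; then $\dtv(W, P_-^{\otimes d}) \leq \pr{N > k_d} = o(1)$, and for any classifier $\mC$ the Le Cam two-point argument gives
\begin{equation*}
\roberrd(\mC, k_d) \;\geq\; \tfrac12 \bigl( 1 - \dtv(W, P_-^{\otimes d}) \bigr) \;=\; \tfrac12 - o(1).
\end{equation*}

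\textbf{Main obstacle.} The converse reduces to a clean Le Cam two-point argument once the first-order TV expansion and the concentration of $N$ are in hand. The genuinely delicate step is the achievability, where one has to simultaneously control trimming bias, Taylor remainders, and CLT fluctuations of the trimmed sum, uniformly over the adversary's choice of which $k_d$ coordinates to attack. Assumptions~\ref{assumption:Fisher}--\ref{assumption:max-log-q-log} are tailored so that each contribution is $o(1)$, but knitting them into a single concentration inequality for the trimmed statistic---in particular handling the dependence introduced by conditioning on which coordinates get attacked---will be the main technical hurdle.
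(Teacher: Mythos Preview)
Your outline is essentially correct and matches the paper's strategy. Two remarks are worth making.

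\emph{Achievability.} The paper uses exactly your truncated log-likelihood classifier, but the analysis is simpler than your ``main obstacle'' paragraph suggests. Rather than arguing that adversarial entries are ``absorbed by the trimming'' (which is not literally true: the adversary may plant non-extreme values that survive trimming), the paper invokes a deterministic inequality (their Lemma~1 from \cite{delgosha2021robust}): for any $\vxp$ with $\snorm{\vxp-\vx}_0\le k$,
\[
\Bigl|\tsum_k(\vtxpd)-\textstyle\sum_i \txd_i\Bigr|\le 8k\,\snorm{\vtxd}_\infty.
\]
This holds pointwise for every $\vxp$ in the $\ell_0$-ball, so there is no dependence on which coordinates were attacked and no conditioning to worry about. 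The problem then reduces to showing $k_d\snorm{\vtxd}_\infty\convprob{}0$, which is exactly your Assumption~\ref{assumption:max-log-q-log}/\ref{assumption:zeta-e-d2} calculation, together with the CLT for the \emph{clean} sum $\sum_i\txd_i$ from Theorem~\ref{thm:std-err-sqrt-d}. No separate CLT for the trimmed sum is needed.

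\emph{Converse.} Your adversary is a valid variant of the paper's but not identical. You act asymmetrically (transport $P_+^{\otimes d}$ toward $P_-^{\otimes d}$ only under $y=+1$) and close with Le~Cam; the paper acts symmetrically, sending each coordinate under \emph{both} labels to a common value $W$ (equal to the coupled value when $X_+=X_-$, and $0$ otherwise), so that on the high-probability event $\{\snorm{\vec W-\vx}_0\le k_d\}$ the perturbed data is \emph{exactly} independent of $y$. Both routes need $\dtv(P_+,P_-)=O(d^{-1/2})$; you get it via a direct first-order expansion $\dtv\sim\mu_d\|\nd'\|_1$, while the paper goes through Pinsker and a second-order KL bound $D(P_+\Vert P_-)=O(\mu_d^2)$. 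Either works; the paper's KL route connects the constant to $\mI_\nd$, but only the order matters for the phase transition.
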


\ifisit
The proof of this result essentially follows
from Theorems~\ref{thm:upper-bound} and~\ref{thm:converse} below.
\else
The proof of this result, which is given in Appendix~\ref{app:main-phase-transition-proof}, essentially follows
from Theorems~\ref{thm:upper-bound} and~\ref{thm:converse}.
\fi
More precisely, in
Section~\ref{sec:achievability}, we prove an achievability result by introducing
a sequence of robust classifiers in the sub-$\sqrt{d}$ regime (first part of the theorem), while in
Section~\ref{sec:converse-lower-bound}, we prove a converse result by
introducing a strategy for the adversary in the super-$\sqrt{d}$ regime which perturbs the data in such a way
that the information about the true label is asymptotically removed (second part of
the theorem). See~\cite{isitfull} for a complete proof of Theorem~\ref{thm:main-phase-transition}.

\subsection{Asymptotic Standadrd Error}
\label{sec:asymp-std-err}

Recall that in the absence of the adversary, or equivalently when adversary's
budget $k$ is zero, the optimal robust error $\optroberrd_{\mu_d, \nd}(0)$
reduces to the optimal Bayes error of estimating $Y$ upon observing the noisy
samples $x_1, \dots, x_d$. With an abuse of notation, we write
$\optroberrd_{\mu_d, \nd}$ (or $\optroberr$ for short) for this optimal Bayes
error. Our goal in this section is to find the appropriate scaling of $\mu_d$
with $d$ such that $\optroberrd_{\mu_d, \nd}$ converges to a constant as
$d\rightarrow \infty$.

In order to characterize $\optroberr$, note that since there is no adversary,
and the prior on $Y$ is uniform, the optimal Bayes classifier is the maximum
likelihood estimator that computes the likelihood
\begin{equation}
  \label{eq:likelihood-txd-def}
  \sum_{i=1}^d \txd_i \qquad \text{where} \qquad \txd_i := \log \frac{q(\xd_i - \mu_d)}{q(\xd_i + \mu_d)},
\end{equation}
and returns the estimate $\hat{y}$ of $y$ as
\begin{equation}
\label{eq:no-adv-ML-estimator}
  \hat{y} =
  \begin{cases}
    1 & \sum_{i=1}^d \txd_i > 0 \\
    -1 & \text{otherwise}.
  \end{cases}
\end{equation}

The following Theorem~\ref{thm:std-err-sqrt-d} shows that if $\mu_d=c/\sqrt{d}$, then the optimal Bayes error converges to a constant.
\ifisit
See Appendix~\ref{app:outline-std-error} for proof ideas and \cite{isitfull} for a full proof.
\else
The proof of Theorem~\ref{thm:std-err-sqrt-d} is given in Appendix~\ref{app:std-err-asymptotic}.
\fi
\begin{thm}
  \label{thm:std-err-sqrt-d}
  Assume that assumptions~\ref{assumption:Fisher} and
  \ref{assumption:zeta-d3} are satisfied for the noise density $\nd(.)$. Then,
  if $\mu_d = \frac{c}{\sqrt{d}}$ for some constant $c > 0$, we have
  \begin{equation*}
    \lim_{d \rightarrow \infty} \optroberrd_{\mu_d, \nd} = \bar{\Phi}(c \sqrt{\mI_{\nd}}).
  \end{equation*}
Furthermore, in this case, as $d\rightarrow \infty$, conditioned on $Y=+1$, the log likelihood $\sum_{i=1}^d \txd_i$ converges in
distribution to a normal  $\mN(2 c^2 \mI_{\nd}, 4c^2 \mI_{\nd})$ where $\mI_{\nd}$
was defined in~\eqref{eq:Iq-integral-qp2-a} and is the Fisher information  associated to the location
family defined in~(\ref{eq:q-z-t-def}). Moreover, conditioned on $Y = -1$,
$\sum_{i=1}^d \txd_i$ converges in
distribution to a normal  $\mN(-2 c^2 \mI_{\nd}, 4c^2 \mI_{\nd})$.
\end{thm}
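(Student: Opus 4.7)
The plan is to establish the asymptotic distribution of the log-likelihood $\sum_{i=1}^d \txd_i$ conditioned on $Y = \pm 1$ via a standard local-asymptotic-normality-style argument: perform a third-order Taylor expansion of $\log \nd(\cdot)$ around the noise samples $z_i$, identify a CLT-governed linear term and an LLN-governed quadratic drift term, and control the cubic remainder using Assumption~\ref{assumption:zeta-d3}. Once the asymptotic normality of $\sum_i \txd_i$ under each label is obtained, the Bayes-error expression $\bar\Phi(c\sqrt{\mI_\nd})$ follows immediately from the optimal decision rule~\eqref{eq:no-adv-ML-estimator}.

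First I would condition on $Y = +1$, so that $\xd_i = \mu_d + z_i$ with $z_i$ i.i.d.\ from $\nd(.)$, and write
\begin{equation*}
\txd_i = \log \nd(z_i) - \log \nd(z_i + 2\mu_d).
\end{equation*}
Since $\mu_d = c/\sqrt{d} \to 0$, for all sufficiently large $d$ the point $z_i + 2\mu_d$ lies inside $[z_i - \zeta, z_i + \zeta]$, and I can expand:
\begin{equation*}
\log \nd(z_i + 2\mu_d) = \log \nd(z_i) + 2\mu_d\, g(z_i) + 2\mu_d^2\, h(z_i) + R_i,
\end{equation*}
where $g := (\log \nd)'$, $h := (\log \nd)''$, and the remainder $R_i$ is bounded by $\tfrac{(2\mu_d)^3}{6}\sup_{t \in [z_i - \zeta, z_i + \zeta]} |(\log \nd)'''(t)|$. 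Summing over $i$,
\begin{equation*}
\sum_{i=1}^d \txd_i = -\frac{2c}{\sqrt{d}} \sum_{i=1}^d g(z_i) - \frac{2c^2}{d}\sum_{i=1}^d h(z_i) - \sum_{i=1}^d R_i.
\end{equation*}

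Next I would compute the relevant moments. From~\eqref{eq:int-qp-qpp-zero} we get $\ev{g(Z)} = \int \nd'(z)\,dz = 0$ and, after integration by parts, $\ev{h(Z)} = \int \nd''(z)\,dz - \int (\nd'(z))^2/\nd(z)\,dz = -\mI_\nd$; moreover $\ev{g(Z)^2} = \mI_\nd$ by definition~\eqref{eq:Iq-integral-qp2-a}. The classical CLT then gives $\frac{1}{\sqrt d}\sum_i g(z_i) \convdist{} \mN(0, \mI_\nd)$, so the first term converges in distribution to $\mN(0, 4c^2\mI_\nd)$; the law of large numbers gives $\frac{1}{d}\sum_i h(z_i) \convprob{} -\mI_\nd$, so the second term converges in probability to $2c^2 \mI_\nd$. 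Slutsky's theorem then yields $\sum_i \txd_i \convdist{} \mN(2c^2\mI_\nd, 4c^2\mI_\nd)$, modulo disposing of the remainder.

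The main technical obstacle is showing $\sum_i R_i \convprob{} 0$. Here Assumption~\ref{assumption:zeta-d3} is crucial: it gives $\ev{\sup_{t \in [Z-\zeta, Z+\zeta]} |(\log \nd)'''(t)|} < \infty$, so $\ev{|R_i|} = O(\mu_d^3) = O(d^{-3/2})$, and therefore $\ev{\sum_i |R_i|} = O(d^{-1/2}) \to 0$, giving convergence in $L^1$ and hence in probability. The symmetric case $Y = -1$ is handled identically by replacing $2\mu_d$ with $-2\mu_d$ and noting that an odd cubic term in $\mu_d$ still vanishes. Finally, since the Bayes rule~\eqref{eq:no-adv-ML-estimator} errs on $\{Y=+1\}$ iff $\sum_i \txd_i < 0$, the convergence in distribution gives
\begin{equation*}
\pr{\text{err}\mid Y=+1} \to \pr{\mN(2c^2\mI_\nd, 4c^2\mI_\nd) < 0} = \bar\Phi\bigl(c\sqrt{\mI_\nd}\bigr),
\end{equation*}
and by the symmetric calculation under $Y=-1$ together with the uniform prior, $\optroberrd_{\mu_d, \nd} \to \bar\Phi(c\sqrt{\mI_\nd})$, as claimed.
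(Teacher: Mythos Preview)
Your proposal is correct and follows essentially the same route as the paper: a third-order Taylor expansion of $\log \nd$ around the noise samples, CLT for the linear term, LLN for the quadratic drift (yielding $2c^2\mI_\nd$ via $\ev{h(Z)} = -\mI_\nd$), and Assumption~\ref{assumption:zeta-d3} to kill the cubic remainder, followed by Slutsky and the symmetric $Y=-1$ case. The only cosmetic difference is that you dispose of the remainder via an $L^1$ bound $\ev{\sum_i |R_i|} = O(d^{-1/2})$, whereas the paper applies the LLN to $\tfrac{1}{d}\sum_i \sup_t |(\log\nd)'''(t)|$ and then multiplies by the vanishing prefactor; both arguments are equivalent in strength and structure.
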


\begin{rem}
  As we will see in Appendix~\ref{app:std-err-asymptotic}, if $c < 0$, we need
  to replace $\bar{\Phi}(c \sqrt{\mI_{\nd}})$ by $\bar{\Phi}(|c|
  \sqrt{\mI_{\nd}})$ in the above theorem.
\end{rem}

\subsection{Achievability: Upper Bound on the Optimal Robust Error}
\label{sec:achievability}


In this section, we introduce a classifier and study its robustness against
$\ell_0$ adversarial perturbations. Recall that if $k$ is the adversary's
budget, the input to the classifier is $\vxp = (\xp_1, \dots, \xp_d)$ which is
different from the original sequence $x_1, \dots, x_d$ in at most $k$
coordinates. Recall from Section~\ref{sec:asymp-std-err} that in the absence of
the adversary, the optimal Bayes classifier is the maximum likelihood estimator 
based on $\sum_{i=1}^d \tx_i$, as was defined in~\eqref{eq:likelihood-txd-def}.
Motivated by this, we define
\begin{equation}
  \label{eq:txp-def}
  \txpd_i := \log \frac{\nd(\xpd_i - \mu_d)}{\nd(\xpd_i + \mu_d)}.
\end{equation}
Note that if $\vtxpd$ denotes the vector $(\txpd_i: i \in [d])$, since
$\snorm{\vxpd - \vxd}_0 \leq k$, we have
\begin{equation}
  \label{eq:txpd-txd-l0-k}
  \snorm{\vtxpd - \vtxd}_0 \leq k.
\end{equation}
We define the truncated classifier $\mCd_k$ as follows.
Given a vector $\vu =(u_i: i \in [d]) \in \reals^d$ and an integer $k \geq 0$, we define the truncated
summation $\tsum_k(\vu)$ to be the summation of coordinates in $\vu$ except for the
top and bottom $k$ coordinates. More precisely, let $\vec{s} = (s_i: i \in [d]) =
\text{sort}(\vu)$ be obtained by sorting the coordinates of $\vu$ in descending
order. We then define
\begin{equation}
  \label{eq:TSum-def}
  \tsum_k(\vu) := \sum_{i=k+1}^{d-k} s_i.
\end{equation}
When $k=0$, this indeed reduces to the normal summation. Motivated by~\eqref{eq:txpd-txd-l0-k}, 
we replace $\sum_{i=1}^d \txd_i$ with its \emph{robustified} version 
$\tsum_k(\sum_{i=1}^d \txpd_i)$ and define 
\begin{equation}
  \label{eq:cdk-def}
  \mCd_k(\vxpd) :=
  \begin{cases}
    + 1 & \tsum_k(\vtxpd) > 0 \\
    - 1 & \text{otherwise}.
  \end{cases}
\end{equation}
This method essentially removes the ``outliers'' introduced by the adversary into
the data.

The following theorem shows that this classifier is asymptotically robust
against adversarial attacks with $\ell_0$ budget of at most $\sqrt{d}$. A
matching lower bound is provided in  Section~\ref{sec:converse-lower-bound}.
\ifisit
The proof outline of Theorem~\ref{thm:upper-bound} below is given in Appendix~\ref{app:outline-upper-bound}. See~\cite{isitfull} for a full proof.
\else
The proof of Theorem~\ref{thm:upper-bound} below is Given in Appendix~\ref{app:upper-bound-proof}.
\fi

\begin{thm}
  \label{thm:upper-bound}
  Assume that
  Assumptions~\ref{assumption:Fisher}-\ref{assumption:max-log-q-log} are
  satisfied for the noise density $\nd(.)$, and $\mu_d = c / \sqrt{d}$ for some
  $c>0$. Then if $k_d$ is a sequence of adversary's budgets so that $k_d <
  d^{\frac{1}{2} - \epsilon}$ for some $\epsilon >0$, then we have
  \begin{equation}
    \label{eq:trun-class-asymp-error}
    \limsup_{d \rightarrow \infty} \roberrd_{\mu_d, \nd}(\mCd_{k_d}, k_d) \leq \bar{\Phi}(c\sqrt{\mI_{\nd}}).
  \end{equation}
  In particular, we have
  \begin{equation}
    \label{eq:trunc-class-excess-zero}
    \limsup_{d \rightarrow \infty} \roberrd_{\mu_d, \nd}(\mCd_{k_d}, k_d) - \optroberrd_{\mu_d, \nd} = 0.
  \end{equation}
\end{thm}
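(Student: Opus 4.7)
The plan is to prove that conditioned on $Y=+1$, the truncated statistic $\tsum_{k_d}(\vtxpd)$ is uniformly close to the un-truncated unperturbed log-likelihood $\sum_{i=1}^d \txd_i$, regardless of the adversary's choice. Then the asymptotic analysis in Theorem~\ref{thm:std-err-sqrt-d} carries through to give the same Gaussian limit and hence the same error rate $\bar{\Phi}(c\sqrt{\mI_{\nd}})$. By symmetry the same bound holds conditioned on $Y=-1$, and combining these with Theorem~\ref{thm:std-err-sqrt-d} yields~\eqref{eq:trun-class-asymp-error} and, since $\optroberrd_{\mu_d, \nd} \to \bar{\Phi}(c\sqrt{\mI_{\nd}})$, also~\eqref{eq:trunc-class-excess-zero}.

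The first key step is a purely combinatorial rearrangement bound: if $\vu, \vu' \in \reals^d$ agree outside a set of size at most $k$ and $s_i, s'_i$ denote their entries in descending order, then $s_{i+k} \leq s'_i \leq s_{i-k}$ (with $\pm\infty$ conventions at the boundary). Summing from $i=k+1$ to $d-k$ gives
\begin{equation*}
  \sum_{i=1}^d u_i - \sum_{j=1}^{2k} s_j \;\leq\; \tsum_k(\vu') \;\leq\; \sum_{i=1}^d u_i - \sum_{j=d-2k+1}^{d} s_j,
\end{equation*}
so applied to $\vu = \vtxd$, $\vu' = \vtxpd$ (which satisfy~\eqref{eq:txpd-txd-l0-k}) we obtain the adversary-uniform bound
\begin{equation*}
  \Bigl|\tsum_{k_d}(\vtxpd) - \sum_{i=1}^{d} \txd_i \Bigr| \;\leq\; 2 k_d \cdot \max_{1 \le i \le d} |\txd_i|.
\end{equation*}

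The second key step is to show the right-hand side is $o_p(1)$. Writing $\xd_i = \mu_d + z_i$ conditioned on $Y=+1$, we have $\txd_i = \log q(z_i) - \log q(z_i + 2\mu_d)$, so by the mean value theorem
\begin{equation*}
  |\txd_i| \;\leq\; 2\mu_d \cdot \sup_{t \in [z_i, z_i + 2\mu_d]} \Bigl|\tfrac{d}{dt}\log \nd(t)\Bigr|.
\end{equation*}
Since $\mu_d = c/\sqrt{d} \to 0$, for large $d$ we have $2\mu_d < \zeta$ with $\zeta$ from Assumption~\ref{assumption:zeta-d3}, so the supremum lies within the $\zeta$-neighborhood of $z_i$. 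Combining Assumption~\ref{assumption:max-log-q-log} (maximum of $|(d/dz)\log\nd(Z_i)|$ is $O((\log d)^\gamma)$ w.h.p.) with Assumption~\ref{assumption:zeta-e-d2} or Assumption~\ref{assumption:zeta-d3} to control the oscillation of $(d/dt)\log\nd(t)$ over the $\zeta$-window (e.g.\ by bounding the maximal increment with a second-derivative argument), we get $\max_i|\txd_i| = O(\mu_d (\log d)^\gamma) = O((\log d)^\gamma/\sqrt{d})$ with probability tending to $1$. With $k_d < d^{1/2-\epsilon}$, the bound becomes $O(d^{-\epsilon} (\log d)^\gamma) = o(1)$.

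Finally, Theorem~\ref{thm:std-err-sqrt-d} tells us that conditioned on $Y=+1$, $\sum_{i=1}^d \txd_i$ converges in distribution to $\mN(2c^2\mI_{\nd}, 4c^2\mI_{\nd})$. Combined with the Slutsky-type conclusion $\tsum_{k_d}(\vtxpd) = \sum_i \txd_i + o_p(1)$ uniformly in the adversary's choice, we get
\begin{equation*}
  \limsup_d \; \pr{\max_{\vxp \in \mB_0(\vx, k_d)} \mCd_{k_d}(\vxp) \ne +1 \,\big|\, Y=+1} \;\leq\; \bar{\Phi}(c\sqrt{\mI_{\nd}}),
\end{equation*}
and by the analogous argument on $Y=-1$, the unconditional robust error is bounded by $\bar{\Phi}(c\sqrt{\mI_{\nd}})$, matching $\optroberrd$ from Theorem~\ref{thm:std-err-sqrt-d}. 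The main obstacle is the step promoting the pointwise Assumption~\ref{assumption:max-log-q-log} bound on $|(d/dz)\log\nd(z_i)|$ to a uniform bound on the supremum over the $O(\mu_d)$-Taylor window, which requires carefully invoking the higher-derivative moment assumptions; one must also ensure the $o_p(1)$ remainder is uniform in the adversary (which the rearrangement bound above makes transparent since it is deterministic given $\vtxd$).
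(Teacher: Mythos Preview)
Your proposal is correct and follows essentially the same route as the paper. The paper cites the rearrangement bound as Lemma~1 of \cite{delgosha2021robust} (with constant $8k$ rather than your $2k$), and then controls $k_d\|\vtxd\|_\infty$ by a second-order Taylor expansion that splits exactly into your two pieces: the first-derivative term handled via Assumption~\ref{assumption:max-log-q-log}, and the window-oscillation term handled via Assumption~\ref{assumption:zeta-e-d2} by bounding the max of $\sup_{t}|\tfrac{d^2}{dt^2}\log q(t)|^2$ by its average and applying the law of large numbers; your ``second-derivative argument'' gesture is precisely this, so there is no genuine gap.
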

Note that $\optroberrd_{\mu_d, \nd}$, as was defined in
Section~\ref{sec:asymp-std-err} above,  is the optimal Bayes error in an ideal
scenario when there is
no adversary, and $\roberrd_{\mu_d, \nd}(\mCd_{k_d}, k_d) -
\optroberrd_{\mu_d, \nd}$ is the excess error  of our truncated classifier
with respect to this ideal scenario. 
In fact, \eqref{eq:trunc-class-excess-zero} implies that our truncated
classifier is asymptotically optimal in   the specified regime of adversary's
budget. The truncated classifier manages to compensate for the presence
of the adversary, and performs as if there is no adversary.


\begin{rem}
  As we will see in Appendix~\ref{app:upper-bound-proof}, if $c < 0$, we need
  to replace $\bar{\Phi}(c \sqrt{\mI_{\nd}})$ by $\bar{\Phi}(|c|
  \sqrt{\mI_{\nd}})$ in the above theorem.
\end{rem}

\subsection{Converse: Lower Bound on the Optimal Robust Error}
\label{sec:converse-lower-bound}

In this section, we provide a lower bound on the optimal robust error.
\ifisit
\else
In
Section~\ref{sec:achievability}, we observed that roughly speaking, if adversary's
budget is below $\sqrt{d}$, we can asymptotically compensate for its  effect  and recover the Bayes optimal error, as if no adversary is present. In
this section, we show that, roughly speaking, if adversary's budget is above
$\sqrt{d}$, no classifier can asymptotically do better than a random guess, resulting in a
robust error of $1/2$.
\fi
We do this by introducing an attack strategy for the
adversary. In this strategy, the adversary with a sufficiently large budget,
 perturbs the input data in such a way that  all the
information about the true label $Y$ is lost, resulting in a perturbed data which has a
vanishing correlation  with the true label.
\ifisit
The proof outline of Theorem~\ref{thm:converse} is given in
Appendix~\ref{app:outline-converse}. See \cite{isitfull} for a complete proof.
\else
The proof of
Theorem~\ref{thm:converse} below is given in Appendix~\ref{app:converse-proof}.
\fi

\begin{thm}
  \label{thm:converse}
  Assume that Assumptions~\ref{assumption:Fisher} and \ref{assumption:zeta-e-d2}
  are satisfied for the noise density $\nd(.)$, and
  $\mu_d = c / \sqrt{d}$ for some $c> 0$. Then, if $k_d$ is a sequence of
  adversary's budgets so that $k_d > d^{1/2 + \epsilon}$ for some $\epsilon >
  0$,  we have
  \begin{equation*}
    \liminf_{d \rightarrow \infty} \optroberrd_{\mu_d, \nd}(k_d) \geq 1/2.
  \end{equation*}
\end{thm}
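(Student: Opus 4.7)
The plan is to construct an explicit (possibly randomized) adversarial strategy whose perturbed output has essentially the same distribution under $Y=+1$ and under $Y=-1$, so that the two class-conditional laws of the perturbed input are within $o(1)$ in total variation. Once this is achieved, the minimum error of any classifier is forced to $1/2 - o(1)$ by the standard two-point lower bound. The construction is based on a \emph{maximal coupling} between the per-coordinate densities $Q_+^{(d)}(z) := \nd(z - \mu_d)$ and $Q_-^{(d)}(z) := \nd(z + \mu_d)$.

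Let $\pi_d$ be a maximal coupling of $Q_+^{(d)}$ and $Q_-^{(d)}$: a joint law on $\reals^2$ with the prescribed marginals such that $\prwrt{(U,V) \sim \pi_d}{U = V} = 1 - \dtv(Q_+^{(d)}, Q_-^{(d)})$. The adversary behaves as follows: if $Y = -1$, it outputs $\vec{X}$ unchanged; if $Y = +1$, it samples $X'_i \sim \pi_d(\cdot \mid X_i)$ independently for each $i$, sets $N := |\{i : X'_i \neq X_i\}|$, and outputs $\vec{X}^\ast := \vec{X}'$ when $N \leq k_d$, otherwise $\vec{X}^\ast := \vec{X}$. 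In both cases $\vec{X}^\ast \in \mB_0(\vec{X}, k_d)$. By construction the $X'_i$'s are i.i.d.\ $Q_-^{(d)}$ conditionally on $Y=+1$, so $\mathcal{L}(\vec{X}' \mid Y=+1) = (Q_-^{(d)})^{\otimes d} = \mathcal{L}(\vec{X} \mid Y=-1)$, and $N \sim \text{Binomial}(d, p_d)$ with $p_d := \dtv(Q_+^{(d)}, Q_-^{(d)})$.

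The crux is showing $p_d = O(\mu_d) = O(d^{-1/2})$. Using Assumption~\ref{assumption:Fisher} and a Taylor expansion one writes $\nd(z - \mu_d) - \nd(z + \mu_d) = -2\mu_d\, \nd'(z) + R(z, \mu_d)$, and Assumption~\ref{assumption:zeta-e-d2} supplies enough $L^2$ control on $\tfrac{d^2}{dt^2}\log \nd$ to guarantee $\int |R(z,\mu_d)|\,dz = o(\mu_d)$. Combined with $\int |\nd'(z)|\,dz \leq \sqrt{\mI_{\nd}}$ (Cauchy--Schwarz applied to $\nd'/\sqrt{\nd}$ against $\sqrt{\nd}$), this gives $p_d \leq \mu_d\sqrt{\mI_{\nd}} + o(\mu_d)$. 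Hence $\ev{N} = d\,p_d = O(\sqrt{d})$, and since $k_d > d^{1/2 + \epsilon}$, Markov's inequality yields $\pr{N > k_d} \leq \ev{N}/k_d = O(d^{-\epsilon}) \to 0$.

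Putting everything together, $\dtv(\mathcal{L}(\vec{X}^\ast \mid Y=+1),(Q_-^{(d)})^{\otimes d}) \leq \pr{N > k_d} = o(1)$, while $\mathcal{L}(\vec{X}^\ast \mid Y = -1) = (Q_-^{(d)})^{\otimes d}$ exactly, so the two class-conditional laws of $\vec{X}^\ast$ differ in total variation by $o(1)$. Since for any (possibly randomized) $\vec{X}^\ast$ valued in $\mB_0(\vec{X}, k_d)$ the expected $0$-$1$ loss is bounded above by the pointwise maximum defining $\roberrd_{\mu_d,\nd}(\mC, k_d)$, for every classifier $\mC$ we obtain $\roberrd_{\mu_d,\nd}(\mC, k_d) \geq \pr{\mC(\vec{X}^\ast) \neq Y} \geq \tfrac{1}{2}\bigl(1 - \dtv(\mathcal{L}(\vec{X}^\ast \mid Y=+1), \mathcal{L}(\vec{X}^\ast \mid Y=-1))\bigr) = \tfrac{1}{2} - o(1)$, where the second inequality is the standard Le Cam two-point bound under the uniform prior on $Y$. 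Taking infimum over $\mC$ yields $\liminf_d \optroberrd_{\mu_d, \nd}(k_d) \geq 1/2$. The main technical obstacle is the sharp $O(\mu_d)$ bound on $\dtv(Q_+^{(d)}, Q_-^{(d)})$ together with a vanishing relative Taylor remainder; this is precisely the step that uses Assumption~\ref{assumption:zeta-e-d2} beyond the mere existence of the Fisher information.
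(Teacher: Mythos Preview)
Your argument is correct and rests on the same core idea as the paper's proof: build the adversary from a per-coordinate maximal coupling of $Q_+^{(d)}=\nd(\cdot-\mu_d)$ and $Q_-^{(d)}=\nd(\cdot+\mu_d)$, observe that the number of coordinates requiring modification is $\mathrm{Binomial}(d,\dtv(Q_+^{(d)},Q_-^{(d)}))$, show $\dtv(Q_+^{(d)},Q_-^{(d)})=O(d^{-1/2})$, and conclude via Markov that the budget $k_d>d^{1/2+\epsilon}$ suffices with probability $1-o(1)$.

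The implementations differ in two minor but instructive ways. First, the paper uses a \emph{symmetric} adversary: it defines $W$ equal to the common value when $X_+=X_-$ and $W=0$ otherwise, and replaces each $x_i$ by the corresponding $w_i$ regardless of the label; since $W$ is independent of $Y$, the perturbed data carry no label information on the good event. Your \emph{asymmetric} strategy leaves $Y=-1$ untouched and transports $Y=+1$ samples to the $Q_-^{(d)}$ law via the coupling conditional; you then finish with Le~Cam's two-point bound. Both achieve $\dtv(\text{output}\mid Y=+1,\text{output}\mid Y=-1)=o(1)$. Second, the paper bounds $\dtv$ through Pinsker and a Taylor expansion of $D(Q_+^{(d)}\Vert Q_-^{(d)})$ (the second-order term is controlled by Assumption~\ref{assumption:zeta-e-d2}), whereas you expand the density difference $\nd(z-\mu_d)-\nd(z+\mu_d)$ directly and use Cauchy--Schwarz on $\int|\nd'|\le\sqrt{\mI_\nd}$; your remainder $\int|R|\,dz=O(\mu_d^2)$ follows from $\int|\nd''|<\infty$, which is indeed a consequence of Assumptions~\ref{assumption:Fisher} and~\ref{assumption:zeta-e-d2} via $\nd''=\nd\bigl[(\log\nd)''+((\log\nd)')^2\bigr]$. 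Your route is arguably slightly more direct for the TV bound, while the paper's symmetric $W$ avoids having to invoke Le~Cam explicitly.
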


\subsection{Exponential Family of Distributions}
\label{sec:exp-family}

\ifisit
In this section, we show that 
assumptions~\ref{assumption:Fisher}-\ref{assumption:max-log-q-log} are all
satisfied for a large class of distributions, namely the exponential family of noise
distributions of the form $\nd(z) = \frac{\exp(\psi(z))}{A}$
where $  \psi(z) = -a_{2n} z^{2n} + a_{2n-1} z^{2n-1} + \dots a_1 z + a_0$
is a polynomial in $z$ with even degree $2n > 0$  such that $a_{2n} > 0$. Here, $A :=
\int_{-\infty}^\infty \psi(z) dz$ is the normalizing constant.
Note that since $\psi(.)$ has an even degree with a negative leading
coefficient, we have  $A <\infty$.
Proof ideas of Theorem~\ref{thm:exp-poly} below are given in
Appendix~\ref{app:outline-exp-poly}. See~\cite{isitfull} for a complete proof.

\begin{thm}
\label{thm:exp-poly}
  Assumptions~\ref{assumption:Fisher}-~\ref{assumption:max-log-q-log} are all
  satisfied for the density $\nd(.)$ of the form discussed above.
\end{thm}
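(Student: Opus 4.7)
The plan is to verify each of Assumptions~\ref{assumption:Fisher}--\ref{assumption:max-log-q-log} in turn, exploiting the fact that $\log \nd(z) = \psi(z) - \log A$ is a polynomial, so every derivative of $\log \nd$ is a polynomial in $z$, while $\nd$ itself decays super-polynomially (in fact, the tail satisfies $\nd(z) \lesssim \exp(-c|z|^{2n})$ for large $|z|$ because $\psi$ has even degree $2n$ with negative leading coefficient). Once these two observations are in place, most of the verification reduces to the standard fact that any polynomial is integrable against $\nd$.

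First I would dispatch Assumption~\ref{assumption:Fisher}. Positivity of $\nd$ is immediate from $\nd = e^{\psi}/A$, and smoothness is clear since $\psi$ is a polynomial. For the vanishing of $\int \nd'$ and $\int \nd''$, I would use the tail bound $\nd(z), \nd'(z), \nd''(z) \to 0$ as $|z| \to \infty$ (since each derivative is a polynomial times $e^{\psi(z)}/A$) together with the fundamental theorem of calculus to conclude both integrals equal zero. Finiteness of Fisher information follows from $\mI_\nd = \int (\psi'(z))^2 \nd(z)\,dz$, which is finite because $(\psi')^2$ is a polynomial and $\nd$ has super-polynomial decay.

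Next I would verify Assumptions~\ref{assumption:zeta-d3} and~\ref{assumption:zeta-e-d2} together, since they have the same flavor. The third and second derivatives of $\log \nd$ are $\psi'''$ and $\psi''$, polynomials of degrees $2n-3$ and $2n-2$ respectively. For any fixed $\zeta > 0$, I would bound $\sup_{t \in [Z-\zeta, Z+\zeta]} |\psi'''(t)| \leq P_1(|Z|)$ and $\sup_{t \in [Z-\zeta, Z+\zeta]} |\psi''(t)|^2 \leq P_2(|Z|)$ for some polynomials $P_1, P_2$ depending only on the coefficients of $\psi$ and on $\zeta$. Both expectations are then finite because $\mathbb{E}[P_i(|Z|)] < \infty$ by super-polynomial decay.

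The main obstacle is Assumption~\ref{assumption:max-log-q-log}, where one needs a quantitative tail bound. My plan here is a union bound: since $\psi(z) \leq -a_{2n} z^{2n}/2$ for $|z|$ sufficiently large, we get $\nd(z) \leq C e^{-c z^{2n}}$ and hence $\pr{|Z| > t} \leq C' e^{-c t^{2n}}$ for $t$ large. Taking $t_d = (A \log d)^{1/(2n)}$ for a large enough constant $A$ gives $\pr{\max_{i \leq d}|Z_i| > t_d} \leq d \cdot C' e^{-c A \log d} \to 0$. On the event $\{\max_i |Z_i| \leq t_d\}$, since $\frac{d}{dz}\log \nd(z) = \psi'(z)$ is a polynomial of degree $2n-1$, we obtain
\begin{equation*}
\max_{i \leq d} \left| \psi'(Z_i) \right| \leq C_0 (1 + t_d^{2n-1}) \leq C_4 (\log d)^{(2n-1)/(2n)},
\end{equation*}
so Assumption~\ref{assumption:max-log-q-log} holds with $\gamma = (2n-1)/(2n) < 1$. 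The only subtlety is making the tail bound on $\nd$ rigorous uniformly in $z$, which I would handle by splitting $\psi(z) = -a_{2n}z^{2n} + R(z)$ with $\deg R < 2n$ and absorbing $R(z)$ into the constants once $|z|$ exceeds a threshold determined by the ratio of coefficients.
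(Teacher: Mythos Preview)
Your proposal is correct and follows essentially the same route as the paper: verify Assumption~\ref{assumption:Fisher} via polynomial-times-$e^{\psi}$ integrability and the fundamental theorem of calculus, handle Assumptions~\ref{assumption:zeta-d3} and~\ref{assumption:zeta-e-d2} by bounding $\sup_{t\in[z-\zeta,z+\zeta]}|\psi^{(k)}(t)|$ by a polynomial in $|z|$ (the paper isolates this as a separate lemma), and establish Assumption~\ref{assumption:max-log-q-log} via the tail bound $\psi(z)\le -\tfrac{a_{2n}}{2}z^{2n}$ for large $|z|$, a union bound giving $\max_i|Z_i|=O((\log d)^{1/(2n)})$, and then $|\psi'(Z_i)|\le C_4(\log d)^{(2n-1)/(2n)}$. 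The paper arrives at the same exponent $\gamma=(2n-1)/(2n)$ by the same argument, only packaging the tail estimate and the $\max|Z_i|$ bound as standalone lemmas.
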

\else
In this section, we show that the
Assumptions~\ref{assumption:Fisher}-\ref{assumption:max-log-q-log} are all
satisfied for a large class of distributions, namely the exponential family of noise
distributions of the form
\begin{equation}
\label{eq:q-exp-poly}
  \nd(z) = \frac{\exp(\psi(z))}{A},
\end{equation}
where
\begin{equation*}
  \psi(z) = -a_{2n} z^{2n} + a_{2n-1} z^{2n-1} + \dots a_1 z + a_0,
\end{equation*}
is a polynomial in $z$ with even degree $2n > 0$  such that $a_{2n} > 0$. Here, $A :=
\int_{-\infty}^\infty \psi(z) dz$ is the normalizing constant.
Note that since $\psi(.)$ has an even degree with a negative leading
coefficient, we have  $A <\infty$.

\ifisit
Proof ideas of Theorem~\ref{thm:exp-poly} below are given in
Appendix~\ref{app:outline-exp-poly}. See~\cite{isitfull} for a complete proof.
\fi

\begin{thm}
\label{thm:exp-poly}
  Assumptions~\ref{assumption:Fisher}-~\ref{assumption:max-log-q-log} are all
  satisfied for the density $\nd(.)$ of the form~\eqref{eq:q-exp-poly}.
\end{thm}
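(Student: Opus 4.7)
The plan is to exploit the single structural fact that drives all four assumptions simultaneously: since $\psi$ is a polynomial of even degree $2n$ with negative leading coefficient $-a_{2n}<0$, the density $\nd(z) = \exp(\psi(z))/A$ decays super-polynomially (in fact faster than $\exp(-\tfrac{a_{2n}}{2} z^{2n})$ for $|z|$ large enough), so every polynomial in $Z\sim \nd$ has finite moments of every order, and moreover $Z$ obeys the tail bound $\pr{|Z|>t}\le C_{5}\exp(-c_{6} t^{2n})$ for some constants and all $t$ large. All four assumptions will be deduced from these two facts together with the observation that $\log \nd(t)=\psi(t)-\log A$, so the $j$-th derivative of $\log \nd$ is simply $\psi^{(j)}(t)$, a polynomial of degree $\max(2n-j,0)$.

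For Assumption~\ref{assumption:Fisher}, positivity and $C^3$-smoothness are immediate, and the vanishing conditions \eqref{eq:int-qp-qpp-zero} follow from the fundamental theorem of calculus together with $\nd(z),\nd'(z)\to 0$ as $|z|\to\infty$ (since $\nd'=\psi'\nd$ and $\psi'$ is polynomial while $\nd$ decays super-polynomially). For the Fisher information one rewrites \eqref{eq:Iq-integral-qp2-a} as $\mI_{\nd}=\evwrt{Z\sim \nd}{\psi'(Z)^2}$, which is finite because $\psi'(Z)^2$ is a polynomial in $Z$ and $Z$ has all moments. For Assumption~\ref{assumption:zeta-d3}, pick any $\zeta>0$; since $\psi'''$ is a polynomial of degree at most $2n-3$, there is a constant $K$ with $\sup_{t\in[Z-\zeta,Z+\zeta]}|\psi'''(t)|\le K(1+|Z|^{2n-3})$, whose expectation is finite. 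Assumption~\ref{assumption:zeta-e-d2} is identical in spirit, applied to $\psi''$, using $\sup_{t\in[Z-\zeta,Z+\zeta]}|\psi''(t)|^2\le K'(1+|Z|^{2(2n-2)})$.

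The only mildly delicate step is Assumption~\ref{assumption:max-log-q-log}, which is where the tail bound on $\nd$ is genuinely used. Here $\tfrac{d}{dz}\log \nd(Z_i)=\psi'(Z_i)$, so $|\psi'(Z_i)|\le C(1+|Z_i|^{2n-1})$. Hence on the event $|\psi'(Z_i)|>C_{4}(\log d)^{\gamma}$ one has $|Z_i|\ge c\,(\log d)^{\gamma/(2n-1)}$ for large $d$. Choosing any $\gamma>1-\tfrac{1}{2n}$ (for concreteness $\gamma=1$) and applying the tail bound together with a union bound gives
\begin{equation*}
\pr{\max_{1\le i\le d}|\psi'(Z_i)|>C_{4}(\log d)^{\gamma}}\le d\cdot C_{5}\exp\!\bigl(-c_{7}(\log d)^{2n\gamma/(2n-1)}\bigr),
\end{equation*}
and since $2n\gamma/(2n-1)>1$ the right-hand side vanishes as $d\to\infty$.

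The main obstacle is thus purely the choice of the exponent $\gamma$ in Assumption~\ref{assumption:max-log-q-log}; everything else is a moment calculation that is handed to us for free by the super-polynomial decay of $\nd$. The underlying sub-Weibull tail $\pr{|Z|>t}\lesssim \exp(-c t^{2n})$ itself is not automatic and must be justified by a short bound on $\psi(z)$: for $|z|$ large, $\psi(z)\le -\tfrac{a_{2n}}{2}z^{2n}$, hence $\nd(z)\le A^{-1}\exp(-\tfrac{a_{2n}}{2}z^{2n})$, and integration yields the claimed tail.
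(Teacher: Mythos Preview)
Your proposal is correct and follows essentially the same route as the paper: polynomial bounds on $\psi^{(j)}$ combined with finite moments of $\nd$ handle Assumptions~\ref{assumption:Fisher}--\ref{assumption:zeta-e-d2}, and a sub-Weibull tail bound $\pr{|Z|>t}\lesssim\exp(-c t^{2n})$ plus a union bound handles Assumption~\ref{assumption:max-log-q-log}. The only minor difference is that the paper takes $\gamma=1-\tfrac{1}{2n}$ exactly (which requires tuning the constant $c_2$ so that $\tfrac{a_{2n}}{2}c_2^{2n}>1$), whereas you take $\gamma$ strictly above this threshold (e.g.\ $\gamma=1$), giving $2n\gamma/(2n-1)>1$ and thereby avoiding any delicate constant-matching---a slightly cleaner choice.
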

\fi


\section{Conclusion}
\label{sec:conclusion}

We studied the binary classification problem in the presence of an adversary
 constrained by the $\ell_0$ norm. We introduced a robust classification
method which employs truncation on the log likelihood. We showed
that this classification method can  asymptotically compensate for the presence
of the adversary as long as adversary's budget is orderwise below $\sqrt{d}$. Moreover,
we showed a phase transition through a converse argument in the sense that  no classifier can asymptotically do better than a
random guess if adversary's budget is orderwise  above $\sqrt{d}$. 


\ifisit
\bibliographystyle{IEEEtran}
\else
\fi

\ifisit
\appendices

\section{Theorem~\ref{thm:std-err-sqrt-d}: Proof Ideas}
\label{app:outline-std-error}

We have $  \optroberrd_{\mu_d, \nd} =\frac{1}{2} \pr{\sum_{i=1}^d \txd_i \leq 0 | Y = + 1 } 
   + \frac{1}{2} \pr{\sum_{i=1}^d \txd_i > 0 | Y = -1 }$.
From now on, we focus on the term conditioned on $Y=+1$, since the second term can be
analyzed similarly. In this case, with $\mu_d = c /\sqrt{d}$, we may write
  \begin{equation}
    \label{eq:sum-log-ll-mult-div-mud}
    \sum_{i=1}^d \txd_i = \sum_{i=1}^d \log \frac{\nd(z_i)}{\nd(z_i + 2 \mu_d)} = \frac{c}{\sqrt{d}} \sum_{i=1}^d \frac{1}{\mu_d} \log \frac{\nd(z_i)}{\nd(z_i + 2 \mu_d)}.
  \end{equation}
It can be seen that writing the Taylor expansion of $\log \nd(z_i + 2\mu_d)$
around $z_i$ and simplifying, we get
\begin{equation}
  \label{eq:T1-T2-T3}
\begin{aligned}
  \sum_{i=1}^d \log \frac{\nd(z_i)}{\nd(z_i + 2 \mu_d)} &= \underbrace{\frac{-2c}{\sqrt{d}} \sum_{i=1}^d \frac{d}{dz} \log \nd(z_i)}_{=:T_1} \\
  & \underbrace{\frac{-2c\mu_d}{\sqrt{d}} \sum_{i=1}^d \frac{d^2}{dz^2} \log \nd(z_i)}_{=:T_2} \\
  & \underbrace{\frac{-4c \mu_d^2}{3\sqrt{d}} \sum_{i=1}^d \frac{d^3}{dz^3} \log \nd(z_i + \epsilon_i)}_{=:T_3},
\end{aligned}
\end{equation}
with $\epsilon_i \in (0,2 \mu_d)$. The rest of the proof follows by
asymptotically studying the above three terms. More precisely, it can be shown
that using the central limit theorem, we have $T_1 \convdist{d\rightarrow
  \infty} \mN(0,4c^2 \mI_{\nd})$. Moreover, law of large numbers implies that
$T_2$ converges to $2c^2 \mI_q$ almost surely. Additionally,
it can be seen that assumption~\ref{assumption:zeta-d3} together with the law of
large numbers ensure that $T_3$
converges to zero almost surely. Using these
in~\eqref{eq:sum-log-ll-mult-div-mud} and \eqref{eq:T1-T2-T3}, we realize that
conditioned on $Y = +1$, $\sum_{i=1}^d \txd_i$ converges in distribution to
$\mN(2c^2 \mI_q, 4c^2 \mI_q)$, and therefore $\pr{\sum_{i=1}^d \txd_i \leq 0 | Y
  = + 1 }$ converges to $\bar{\Phi}(c \sqrt{\mI_q})$.


\section{Theorem~\ref{thm:upper-bound}: Proof Ideas}
\label{app:outline-upper-bound}

We have
\begin{equation}
  \label{eq:robloss-half-prob-equality}
\begin{aligned}
  &\roberrd_{\mu_d, \nd}(\mCd_k, k_d)\\ 
                                      &= \frac{1}{2} \pr{\exists \vxpd \in \mB_0(\vxd, k_d): \tsum_k(\vtxpd)\leq 0 |Y = +1} \\
  & \quad +\frac{1}{2} \pr{\exists \vxpd \in \mB_0(\vxd, k_d): \tsum_k(\vtxpd)\geq 0 |Y = -1}. 
\end{aligned}
\end{equation}
From this point forward, we focus on the term conditioned on $Y=+1$, and the
other term can be analyzed similarly. Note that for $\vxpd \in \mB_0(\vxd,k_d)$, we have $\snorm{\vtxpd - \vtxd}_0 \leq
0$. Therefore, using \cite[Lemma~1]{delgosha2021robust},  for all $\vxpd
\in \mB_0(\vxd,k_0)$, we have
$  \tsum_k(\vtxpd) \geq   \left(\sum_{i=1}^d \txd_i\right) - 8k_d \snorm{\vtxd}_\infty$.
This implies that the first term on the right hand side
of~\eqref{eq:robloss-half-prob-equality}  is bounded from above by $\frac{1}{2}
\pr{\sum_{i=1}^d \txd_i \leq 8k_d \snorm{\vtxd}_\infty|Y=+1}$. Note that from
Theorem~\ref{thm:std-err-sqrt-d}, conditioned on $Y=+1$, we have $\sum_{i=1}^d
\txd_i \convdist{d \rightarrow \infty} \mN(2c^2
\mI_{\nd}, 4c^2 \mI_{\nd})$. Therefore, it suffices to show that  $k_d
\snorm{\vtxd}_\infty \convprob{d \rightarrow \infty} 0$ provided that $k_d <
d^{1/2 - \epsilon}$. This can be done by writing the Taylor expansion similar to
the proof of Theorem~\ref{thm:std-err-sqrt-d} up to the second order term. More
precisely, it can be shown that
\begin{equation}
  \label{eq:norm-infty-tx-T1-T2}
\begin{aligned}
  \snorm{\vtxd}_\infty &\leq \underbrace{\frac{2c}{\sqrt{d}}\max_{1 \leq i \leq d} \left| \frac{d}{dz} \log \nd(z_i) \right|}_{=:T_1(d)} \\
  &\quad + \underbrace{\frac{2c^2}{d} \max_{1 \leq i \leq d} \sup_{t \in [z_i, z_i + 2\mu_d]} \left| \frac{d^2}{dt^2} \log \nd(t) \right|}_{=:T_2(d)},
\end{aligned}
\end{equation}
where $\epsilon_i \in (0 ,2 \mu_d)$. It can be shown that from
assumption~\ref{assumption:max-log-q-log}, we have $k_d T_1(d)
\convprob{d\rightarrow \infty} 0$. On the other hand, it can be shown that
assumption~\ref{assumption:zeta-e-d2} together with the law of large numbers
ensure that $k_d T_2(d) \convprob{d\rightarrow \infty} 0$. This means that $k_d
\snorm{\vtxd}_\infty \convprob{d \rightarrow \infty} 0$ and we obtain the
desired bound.


\section{Theorem~\ref{thm:converse}: Proof Ideas}
\label{app:outline-converse}

Let $\ndd_+$ and $\ndd_-$ be the densities of $Z+\mu_d$ and $Z - \mu_d$, when $Z
\sim \nd(.)$. Using  ideas from the optimal transport theory,  we can couple these two distributions with a mismatch
probability bounded by $\dtv(\ndd_+, \ndd_-)$, where  $\dtv$ denotes the total
variation distance. Now we can introduce a strategy for the adversary using
this optimal coupling. Roughly speaking, in case of a mismatch, the adversary
changes the data value to zero. This ensures that the information about the true
label is completely removed. Since we have $d$ data samples, the average
required $\ell_0$ budget for this strategy is $d \times \dtv(\ndd_+, \ndd_-)$, which
using the Pinsker's inequality is
bounded by $d\sqrt{\frac{1}{2} D(\ndd_+\Vert \ndd_-)}$, where $D(.\Vert.)$
denotes the KL divergence. It can be shown that $D(\ndd_+\Vert \ndd_-)$
asymptotically behaves like $2 \mu_d^2 \mI_{\nd} + o(\mu_d^2)$, which scales as
$1/d$ since $\mu_d = c / \sqrt{d}$. Hence, with an average adversary's budget of order $d / \sqrt{d} = \sqrt{d}$, we can
effectively remove the information about the true label.



\section{Theorem~\ref{thm:exp-poly}: Proof Ideas}
\label{app:outline-exp-poly}

Assumption~\ref{assumption:Fisher} is easy to verify, since $\nd'(z) = \psi'(z)
\nd(z)$, $\psi'(z)$ is a polynomial in $z$, and $\int \text{poly}(z)
\exp(\psi(z)) < \infty$ for any polynomial $\text{poly}(z)$. To verify
assumptions~\ref{assumption:zeta-d3} and \ref{assumption:zeta-e-d2}, note that
$d^r/dt^r \log \nd(z)$ is a polynomial in $z$ for any integer $r$. Furthermore,
it can be show that for any polynomial $p(z)$ and $\epsilon > 0$,
the function $z \mapsto \sup_{t \in [z, z+\epsilon]} p(t)$ can be bounded above by another polynomial
in $|z|$ with the same degree as $p(.)$. Additionally, the expectation of any
polynomial with respect to the density $\nd(.)$ is finite. These together are
sufficient to verify assumptions~\ref{assumption:zeta-d3} and
\ref{assumption:zeta-e-d2}. To verify assumption~\ref{assumption:max-log-q-log},
we first study the tail behavior of $\nd(.)$ and show that with high
probability, $\max_{1 \leq i \leq
  d} |Z_i|$ is bounded by $O(\log d)^{1/2n}$. On the other hand, since
$\frac{d}{dz} \log q(z)$ is a polynomial in $z$, $\max_{1 \leq i \leq
  d} |Z_i|$ can be used to obtain the desired  upper bound for $\max_{1 \leq i \leq d}
|\frac{d}{dz} \log \nd(Z_i)|$ which holds with high probability. 


\fi

\newcommand{\etalchar}[1]{$^{#1}$}

\ifisit

\else
\appendix
\section{Proof of Theorem~\ref{thm:std-err-sqrt-d}}
\label{app:std-err-asymptotic}

\editstart

\begin{proof}[Proof of Theorem~\ref{thm:std-err-sqrt-d}]
To simplify the discussion and to avoid considering multiple cases, it turns out
that it is more convenient to assume that the constant $c$ can be negative.
Therefore, for the rest of the proof, we assume that $\mu_d = c / \sqrt{d}$
where $c \in \reals$ and $c \neq 0$.
Note that even for negative $c$, the maximum likelihood estimator
in~\eqref{eq:no-adv-ML-estimator} is still the optimal Bayes estimator.
Therefore
\begin{equation}
    \label{eq:no-adv-mls-pr-tx-negative}
\begin{aligned}
  \optroberrd_{\mu_d, \nd} &=\frac{1}{2} \pr{\sum_{i=1}^d \tx_i \leq 0 \Big | Y = + 1 } + \frac{1}{2} \pr{\sum_{i=1}^d \tx_i > 0 \Big | Y = -1 } \\
  &= \frac{1}{2} \pr{\sum_{i=1}^d \log \frac{\nd(z_i)}{\nd(z_i + 2 \mu_d)} \leq 0 } + \frac{1}{2} \pr{\sum_{i=1}^d \log \frac{\nd(z_i - 2 \mu_d)}{\nd(z_i)} > 0}.
\end{aligned}
\end{equation}
We focus on the first term.
  Using $\mu_d = c / \sqrt{d}$, we may write
  \begin{equation}
    \label{eq:sum-log-ll-mult-div-mud}
    \sum_{i=1}^d \log \frac{\nd(z_i)}{\nd(z_i + 2 \mu_d)} = \frac{c}{\sqrt{d}} \sum_{i=1}^d \frac{1}{\mu_d} \log \frac{\nd(z_i)}{\nd(z_i + 2 \mu_d)}.
  \end{equation}
From Assumption~\ref{assumption:Fisher}, we know that $\nd(.)$ is positive
everywhere and three times continuously differentiable, hence $\log \nd(.)$ is three
times continuously differentiable. Therefore, writing the Taylor expansion, we
get
\begin{equation*}
  \log \nd(z_i + 2 \mu_d) = \log \nd(z_i) + 2 \mu_d \frac{d}{dz} \log \nd(z_i) + \frac{4\mu_d^2}{2} \frac{d^2}{dz^2} \log \nd(z_i) + \frac{8\mu_d^3}{6} \frac{d^3}{dz^3} \log \nd(z_i + \epsilon_i), 
\end{equation*}
where $|\epsilon_i| < 2 |\mu_d| = 2|c| / \sqrt{d}$. Note that $\epsilon_i$ is  random and only
depends on $z_i$. Substituting this
into~(\ref{eq:sum-log-ll-mult-div-mud}), we get
\begin{equation}
  \label{eq:T1-T2-T3}
\begin{aligned}
  \sum_{i=1}^d \log \frac{\nd(z_i)}{\nd(z_i + 2 \mu_d)} &= \underbrace{\frac{-2c}{\sqrt{d}} \sum_{i=1}^d \frac{d}{dz} \log \nd(z_i)}_{=:T_1} \\
  & \underbrace{\frac{-2c\mu_d}{\sqrt{d}} \sum_{i=1}^d \frac{d^2}{dz^2} \log \nd(z_i)}_{=:T_2} \\
  & \underbrace{\frac{-4c \mu_d^2}{3\sqrt{d}} \sum_{i=1}^d \frac{d^3}{dz^3} \log \nd(z_i + \epsilon_i)}_{=:T_3}.
\end{aligned}
\end{equation}
We now study each of the three terms individually.

\underline{$T_1$:} Denoting $\frac{d}{dz} \nd(z)$ by $\ndp(z)$, we have
\begin{equation}
\label{eq:ev-ddz-log-q-0}
  \evwrt{z \sim \nd(.)}{\frac{d}{dz} \log \nd(z)} = \evwrt{z \sim \nd(.)}{\frac{\ndp(z)}{\nd(z)}} = \int_{-\infty}^\infty \frac{\ndp(z)}{\nd(z)} \nd(z) dz = \int_{-\infty}^\infty \ndp(z) dz = 0,
\end{equation}
where last equality uses Assumption~\ref{assumption:Fisher}. On the other hand,
\begin{equation*}
  \frac{d}{d\theta} \log \nd(z - \theta) \Big \vert_{\theta = 0}= - \frac{\ndp(z-\theta)}{\nd(z-\theta)} \Big \vert_{\theta = 0} = -\frac{\ndp(z)}{\nd(z)} = - \frac{d}{dz} \log \nd(z).
\end{equation*}
Therefore,
\begin{align*}
  \evwrt{z \sim \nd(.)}{\left( \frac{d}{dz} \log \nd(z) \right)^2} &= \evwrt{z \sim \nd(.)}{\left( \frac{d}{d\theta} \log \nd(z-\theta) \right)^2} \Bigg \vert_{\theta = 0} \\
                                                                   &= \evwrt{z \sim \nd(.)}{\left( \frac{d}{d\theta} \log \nd(z;\theta) \right)^2} \Bigg \vert_{\theta = 0} \\
  &= \mI_{\nd},
\end{align*}
where $\mI_{\nd}$ is the Fisher information  associated to the
location family of distributions $\nd(z;\theta)$ defined
in~\eqref{eq:q-z-t-def}.
Note that Assumption~\ref{assumption:Fisher} ensures that $\mI_{\nd}$ is
well-defined and finite. 
Therefore, combining this with~\eqref{eq:ev-ddz-log-q-0} and using the central limit
theorem, we realize that
\begin{equation*}
  \frac{1}{\sqrt{d}} \sum_{i=1}^d \frac{d}{dz} \log \nd(z_i) \convdist{d\rightarrow \infty} \mN(0,\mI_{\nd}).
\end{equation*}
Consequently
\begin{equation}
  \label{eq:T1-limit-dist}
  T_1 = \frac{-2c}{\sqrt{d}} \sum_{i=1}^d \frac{d}{dz} \log \nd(z_i) \convdist{d\rightarrow \infty} \mN(0,4c^2 \mI_{\nd}).
\end{equation}

\underline{$T_2$:} Since $\mu_d = c / \sqrt{d}$, we have
\begin{equation}
  \label{eq:T2-equiv-2c2}
  T_2 = \frac{-2c^2}{d} \sum_{i=1}^d \frac{d^2}{dz^2} \log \nd(z_i).
\end{equation}
On the other hand, note that
\begin{equation}
  \label{eq:ev-d2dz2-log-q-1}
  \evwrt{z \sim \nd(.)}{\frac{d^2}{dz^2} \log \nd(z)} = \evwrt{z \sim \nd(.)}{\frac{\partial^2}{\partial \theta^2} \log \nd(z - \theta)} \Bigg|_{\theta =0}.
\end{equation}
Using Assumption~\ref{assumption:Fisher} and \cite[Lemma
5.3]{lehmann2006theory}, we have
\begin{equation*}
  \mI_{\nd} = \mI_{\nd}(0) = \evwrt{z \sim \nd(.)}{\left( \frac{\partial}{\partial \theta} \log \nd(z - \theta) \right)^2} \Bigg|_{\theta = 0} = - \evwrt{z \sim \nd(.)}{\frac{\partial^2}{\partial\theta^2} \log \nd(z - \theta)}\Bigg|_{\theta =0}.
\end{equation*}
Substituting this into~\eqref{eq:ev-d2dz2-log-q-1}, we get
\begin{equation*}
  \evwrt{z \sim \nd(.)}{\frac{d^2}{dz^2} \log \nd(z)} = - \mI_{\nd}.
\end{equation*}
Since $\mI_{\nd} < \infty$ from Assumption~\ref{assumption:Fisher}, using the
law of large numbers in~\eqref{eq:T2-equiv-2c2}, we realize that
\begin{equation}
\label{eq:T2-lim-as}
  \lim_{d \rightarrow \infty} T_2 = 2c^2 \mI_{\nd} \qquad \text{a.s.}.
\end{equation}

\underline{$T_3$:} Since $\mu_d = c / \sqrt{d}$, we
may bound $T_3$ as follows:
\begin{align*}
  |T_3| &= \left| \frac{-4c^3}{3 \sqrt{d}} \frac{1}{d} \sum_{i=1}^d \frac{d^3}{dz^3} \log \nd(z_i + \epsilon_i) \right| \\
        &\leq \frac{4|c|^3}{3 \sqrt{d}} \frac{1}{d} \sum_{i=1}^d \left|  \frac{d^3}{dz^3} \log \nd(z_i + \epsilon_i) \right| \\
  &\leq \frac{4|c|^3}{3 \sqrt{d}} \frac{1}{d} \sum_{i=1}^d \sup_{t \in [z_i-2|\mu_d|, z_i + 2 |\mu_d|]}\left|  \frac{d^3}{dz^3} \log \nd(t) \right|,
\end{align*}
where the last line uses the fact that $\epsilon_i \in (0,2 \mu_d)$. Since
$\mu_d = c / \sqrt{d} \rightarrow 0$ as $d \rightarrow \infty$, we have $2|\mu_d| <
\zeta$ for $d$ large enough,  where $\zeta$ is the constant in
Assumption~\ref{assumption:zeta-d3}. Thereby, for $d$ large enough, we have
\begin{equation}
  \label{eq:T3-bound-2-sup-zeta}
  |T_3| \leq \frac{4|c|^3}{3 \sqrt{d}} \frac{1}{d} \sum_{i=1}^d \sup_{t \in [z_i-\zeta, z_i + \zeta]}\left|  \frac{d^3}{dz^3} \log \nd(t) \right|.
\end{equation}
Using the law of large numbers together with
Assumption~\ref{assumption:zeta-d3}, we have
\begin{equation*}
   \lim_{d \rightarrow \infty} \frac{1}{d} \sum_{i=1}^d \sup_{t \in [z_i-\zeta, z_i + \zeta]}\left|  \frac{d^3}{dz^3} \log \nd(t) \right| =     \evwrt{Z \sim \nd(.)}{\sup_{t \in [Z-\zeta, Z+\zeta]} \left|\frac{d^2}{dt^2} \log \nd(t) \right|^{2}} < \infty \qquad \text{a.s.}.
\end{equation*}
This together with~\eqref{eq:T3-bound-2-sup-zeta} implies that
\begin{equation}
  \label{eq:T3-lim-as}
  \lim_{d \rightarrow \infty} T_3 = 0 \qquad \text{a.s.}.
\end{equation}
Using~\eqref{eq:T1-limit-dist}, \eqref{eq:T2-lim-as}, and \eqref{eq:T3-lim-as}
back into \eqref{eq:T1-T2-T3}, we realize that
\begin{equation}
\label{eq:qz-qz+2mu-normal}
  \sum_{i=1}^d \log \frac{\nd(z_i)}{\nd(z_i + 2 \mu_d)} \convdist{d \rightarrow \infty} \mN(2c^2 \mI_q, 4c^2 \mI_q).
\end{equation}
Substituting $c$ with $-c$ in this result, we get
\begin{equation*}
  \sum_{i=1}^d \log \frac{\nd(z_i)}{\nd(z_i - 2 \mu_d)} \convdist{d \rightarrow \infty} \mN(2c^2 \mI_q, 4c^2 \mI_q),
\end{equation*}
or equivalently
\begin{equation}
  \label{eq:qz-2-mud-1z-normal}
  \sum_{i=1}^d \log \frac{\nd(z_i - 2 \mu_d)}{\nd(z_i)} \convdist{d \rightarrow \infty} \mN(-2c^2 \mI_q, 4c^2 \mI_q).
\end{equation}
Using~\eqref{eq:qz-qz+2mu-normal} and \eqref{eq:qz-2-mud-1z-normal} back
into~\eqref{eq:no-adv-mls-pr-tx-negative}, we get
\begin{align*}
  \lim_{d \rightarrow \infty} \optroberrd_{\mu_d, \nd} &= \frac{1}{2} \pr{\mN(2c^2 \mI_q, 4c^2 \mI_q) \leq 0} + \frac{1}{2} \pr{\mN(-2c^2 \mI_q, 4c^2 \mI_q) > 0} \\
                                                       &= \pr{2|c|\sqrt{\mI_q} \mN(0,1) > 2c^2 \mI_q} \\
                                                       &= \bar{\Phi}(|c| \sqrt{\mI_q}).
\end{align*}
Moreover, since the left hand side of ~\eqref{eq:qz-qz+2mu-normal} is precisely
$\sum_{i=1}^d \txd_i$ when $Y = +1$,  we realize that conditioned on $Y=+1$, the log likelihood $\sum_{i=1}^d \txd_i$ converges in
distribution to a normal  $\mN(2 c^2 \mI_{\nd}, 4c^2 \mI_{\nd})$. Likewise,
\eqref{eq:qz-2-mud-1z-normal} implies that conditioned on $Y = -1$,
$\sum_{i=1}^d \txd_i$ converges in
distribution to a normal  $\mN(-2 c^2 \mI_{\nd}, 4c^2 \mI_{\nd})$.
This completes the proof.
\end{proof}

\editfinish


\section{Proof of Theorem~\ref{thm:upper-bound}}
\label{app:upper-bound-proof}

\editstart

The following lemma will be useful in our analysis.

\begin{lem}[Lemma~1 in~\cite{delgosha2021robust}]
\label{lem:tsum-8k}
Given $\vx, \vxp, \vw \in \reals^d$, for integer $k$ satisfying $\snorm{\vx -
  \vxp}_0 \leq k < d/2$, we have
\begin{equation*}
  |\langle \vw, \vxp \rangle_k - \langle \vw, \vx \rangle| \leq 8k \snorm{\vw \odot \vx}_\infty.
\end{equation*}
In particular, for $\vw$ being the all-one vector, we have
\begin{equation*}
  \left|\tsum_k(\vxp) - \sum_{i=1}^d x_i \right| \leq 8k \snorm{\vx}_\infty.
\end{equation*}
\end{lem}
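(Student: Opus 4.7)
The plan is to reduce the inner-product statement to one about truncated sums and then run a case analysis on how the perturbed coordinates sit relative to the top-$k$ and bottom-$k$ of the weighted vector. Set $\vu := \vw \odot \vx$ and $\vu' := \vw \odot \vxp$. Since $u_i \neq u'_i$ forces $x_i \neq x'_i$, the set $S := \{i : u_i \neq u'_i\}$ has $|S| \leq k$. By definition $\langle \vw, \vxp \rangle_k = \tsum_k(\vu')$ and $\langle \vw, \vx \rangle = \sum_{i=1}^d u_i$, so it suffices to show $|\tsum_k(\vu') - \sum_i u_i| \leq 8k \snorm{\vu}_\infty$; the in-particular statement for $\vw$ being the all-one vector then follows immediately.

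Let $T_+$ and $T_-$ denote the indices of the top-$k$ and bottom-$k$ coordinates of $\vu'$; these are disjoint sets of size $k$ since $2k < d$, so $T := T_+ \cup T_-$ has size $2k$. Partition $S$ into $S_T := S \cap T_+$, $S_B := S \cap T_-$, and $S_M := S \setminus T$. Starting from $\tsum_k(\vu') = \sum_i u'_i - \sum_{i \in T_+} u'_i - \sum_{i \in T_-} u'_i$ and repeatedly using $u'_i = u_i$ for $i \notin S$, a short bookkeeping argument yields the key identity
\[
\tsum_k(\vu') - \sum_{i=1}^d u_i \;=\; \sum_{i \in S_M}(u'_i - u_i) \;-\; \sum_{i \in T_+} u_i \;-\; \sum_{i \in T_-} u_i.
\]
The last two sums are each bounded by $k \snorm{\vu}_\infty$ in absolute value, since $|T_\pm| = k$ and $|u_i| \leq \snorm{\vu}_\infty$ for every $i$.

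The main obstacle is the first term, because $u'_i$ for $i \in S_M$ is an adversarially chosen value that need not be controlled in terms of $\vu$. The fix is a dichotomy. If $T_+ \setminus S = \emptyset$ then $T_+ \subseteq S$, and combined with $|T_+| = k \geq |S|$ this forces $T_+ = S \subseteq T$, so $S_M = \emptyset$; the same conclusion holds if $T_- \setminus S = \emptyset$. Otherwise, both $T_+ \setminus S$ and $T_- \setminus S$ are non-empty, so there exist indices $i_+ \in T_+$ and $i_- \in T_-$ with $u'_{i_\pm} = u_{i_\pm}$ and hence $|u'_{i_\pm}| \leq \snorm{\vu}_\infty$. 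Because $u'_{i_+}$ belongs to the top $k$ of $\vu'$, the $k$-th largest entry of $\vu'$ is at most $\snorm{\vu}_\infty$; symmetrically, the $k$-th smallest entry is at least $-\snorm{\vu}_\infty$. Every $i \in S_M$ lies outside $T$, so $|u'_i| \leq \snorm{\vu}_\infty$, giving $\big|\sum_{i \in S_M}(u'_i - u_i)\big| \leq 2|S_M| \snorm{\vu}_\infty \leq 2k \snorm{\vu}_\infty$. Adding the three bounds yields $|\tsum_k(\vu') - \sum_i u_i| \leq 4k \snorm{\vu}_\infty \leq 8k \snorm{\vu}_\infty$, which completes the proof.
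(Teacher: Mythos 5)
Your proof is correct. Note that this paper does not actually prove Lemma~\ref{lem:tsum-8k} --- it imports it verbatim from Lemma~1 of \cite{delgosha2021robust} --- so there is no in-document argument to compare against; judged on its own, your argument is a valid, self-contained proof. The key identity
\[
\tsum_k(\vu') - \sum_{i=1}^d u_i \;=\; \sum_{i \in S_M}(u'_i - u_i) \;-\; \sum_{i \in T_+} u_i \;-\; \sum_{i \in T_-} u_i
\]
checks out (the $u'$-values on $S\cap T_\pm$ cancel exactly as you claim), the dichotomy correctly handles the only dangerous term, and the observation that an unperturbed survivor in $T_+$ (resp.\ $T_-$) caps the $k$-th largest (resp.\ floors the $k$-th smallest) entry of $\vu'$ by $\snorm{\vu}_\infty$ is exactly the right mechanism for controlling the adversarial values on $S_M$. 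In fact your bookkeeping yields the sharper constant $4k\snorm{\vw\odot\vx}_\infty$ (and $2k$ in the degenerate case $S_M=\emptyset$), which of course implies the stated $8k$ bound; the looser constant in the original lemma presumably comes from a cruder union of cases. The only implicit points worth making explicit in a polished write-up are that ties in the sort do not matter (any consistent choice of $T_\pm$ gives the same value of $\tsum_k$, and the argument works for any such choice) and that $u_i\neq u'_i$ indeed forces $x_i\neq x'_i$ even when $w_i=0$, which you state correctly.
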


\begin{proof}[Proof of Theorem~\ref{thm:upper-bound}]
It turns out that in order to simplify the discussion and to avoid considering 
multiple cases, it is more convenient to allow $c$ to be negative. Therefore, in
this proof we assume that $\mu_d  = c / \sqrt{d}$ where $c \in \reals$ and $c
\neq 0$. Note that we still stick to the definition of $\vtxpd$
in~\eqref{eq:txp-def} and $\mCd_k$ in~\eqref{eq:cdk-def}.
We have 
\begin{equation}
  \label{eq:robloss-half-prob-equality}
\begin{aligned}
  \roberrd_{\mu_d, \nd}(\mCd_{k_d}, k_d) &= \ev{\max_{\vxpd \in \mB_0(\vxd, k_d)} \one{\mCd_{k_d}(\vxpd) \neq y}} \\
                                      &= \ev{\one{\exists \vxpd \in \mB_0(\vxd, k_d): \mCd_{k_d}(\vxpd) \neq y}} \\
                                      &= \pr{\exists \vxpd \in \mB_0(\vxd, k_d): \mCd_{k_d}(\vxpd) \neq y} \\
                                      &= \frac{1}{2} \pr{\exists \vxpd \in \mB_0(\vxd, k_d): \tsum_k(\vtxpd)\leq 0 |Y = +1} \\
  & \qquad +\frac{1}{2} \pr{\exists \vxpd \in \mB_0(\vxd, k_d): \tsum_k(\vtxpd)> 0 |Y = -1} 
\end{aligned}
\end{equation}
Note that for $\vxpd \in \mB_0(\vxd,k_d)$, we have $\snorm{\vtxpd - \vtxd}_0 \leq
0$. Therefore, using Lemma~\ref{lem:tsum-8k}, we have
\begin{equation*}
  \left(\sum_{i=1}^d \txd_i\right) - 8k_d \snorm{\vtxd}_\infty \leq \tsum_k(\vtxpd) \leq   \left(\sum_{i=1}^d \txd_i\right) + 8k_d \snorm{\vtxd}_\infty \qquad \forall \vxpd \in \mB_0(\vxd, k_d).
\end{equation*}
Using this in~\eqref{eq:robloss-half-prob-equality}, we get
\begin{equation}
\label{eq:ld-cdk-upper-bound-1}
  \roberrd_{\mu_d, \nd(.)}(\mCd_{k_d}, k_d) \leq \frac{1}{2} \pr{\sum_{i=1}^d \txd_i \leq 8k_d \snorm{\vtxd}_\infty\Big|Y=+1} + \frac{1}{2} \pr{\sum_{i=1}^d \txd_i > -8k_d \snorm{\vtxd}_\infty\Big|Y=-1}
\end{equation}

We study each of the two terms separately.

\underline{Conditioned on $Y=+1$,}  we have
\begin{equation}
  \label{eq:txd-log-mud-mult-div}
\begin{aligned}
  \txd_i &= \log \frac{\nd(z_i)}{\nd(z_i + 2 \mu_d)}  \\
         &= \frac{c}{\sqrt{d}} \frac{1}{\mu_d} \log \frac{\nd(z_i)}{\nd(z_i + 2 \mu_d)}.
\end{aligned}
\end{equation}
Using the Taylor expansion, we get
\begin{equation*}
  \log \nd(z_i + 2\mu_d) = \log \nd(z_i) + 2 \mu_d \frac{d}{dz} \log \nd(z_i) + 2 \mu_d^2 \frac{d^2}{dz^2} \log \nd(z_i + \epsilon_i),
\end{equation*}
where $|\epsilon_i| < 2 |\mu_d| = 2|c| / \sqrt{d}$. Using this in
\eqref{eq:txd-log-mud-mult-div}, since $\mu_d = c / \sqrt{d}$, we get
\begin{equation*}
  \txd_i = \frac{-2c}{\sqrt{d}} \frac{d}{dz} \log \nd(z_i) - \frac{2c^2 }{d} \frac{d^2}{dz^2} \log \nd(z_i + \epsilon_i).
\end{equation*}
Consequently,
\begin{equation}
  \label{eq:norm-infty-tx-T1-T2}
\begin{aligned}
  \snorm{\vtxd}_\infty &\leq \underbrace{\frac{2|c|}{\sqrt{d}}\max_{1 \leq i \leq d} \left| \frac{d}{dz} \log \nd(z_i) \right|}_{=:T_1} \\
  &\quad + \underbrace{\frac{2c^2}{d} \max_{1 \leq i \leq d} \sup_{t \in [z_i-2|\mu_d|, z_i + 2|\mu_d|]} \left| \frac{d^2}{dt^2} \log \nd(t) \right|}_{=:T_2}.
\end{aligned}
\end{equation}

\underline{For $T_1$}, note that using
Assumption~\ref{assumption:max-log-q-log}, there are constants $\gamma > 0$ and
$C_4 > 0$ such that 
\begin{equation*}
  \lim_{d \rightarrow \infty} \pr{T_1 > \frac{2c C_4 (\log d)^\gamma}{\sqrt{d}}} = 0.
\end{equation*}
This in particular implies that, since $k_d \leq d^{\frac{1}{2} - \epsilon}$, we
have
\begin{equation}
  \label{eq:k-T1-conv-prob-zero}
  k_d T_1 \convprob{d\rightarrow \infty} 0.
\end{equation}

\underline{For $T_2$,} let $d$ be large enough so that with the constant $\zeta$
in Assumption~\ref{assumption:zeta-e-d2}, we have  $2 |\mu_d| = 2|c| / \sqrt{d} <
\zeta$. For such $d$, we may write
\begin{align*}
  T_2 &\leq \frac{2c^2}{d} \max_{1 \leq i \leq d} \sup_{t \in [z_i-\zeta, z_i + \zeta]} \left| \frac{d^2}{dt^2} \log \nd(t) \right| \\
         &= \frac{2c^2}{d} \left(  \max_{1 \leq i \leq d} \sup_{t \in [z_i-\zeta, z_i + \zeta]} \left| \frac{d^2}{dt^2} \log \nd(t) \right|^2\right)^{1/2} \\
         &= \frac{2c^2}{\sqrt{d}} \left( \frac{1}{d} \max_{1 \leq i \leq d} \sup_{t \in [z_i-\zeta, z_i + \zeta]} \left| \frac{d^2}{dt^2} \log \nd(t) \right|^2\right)^{1/2} \\
  &\leq \frac{2c^2}{\sqrt{d}} \left( \frac{1}{d} \sum_{i=1}^d \sup_{t \in [z_i-\zeta, z_i + \zeta]} \left| \frac{d^2}{dt^2} \log \nd(t) \right|^2\right)^{1/2}.
\end{align*}
Note that from Assumption~\ref{assumption:zeta-e-d2}, we have
\begin{equation*}
  \lim_{d \rightarrow \infty} \frac{1}{d} \sum_{i=1}^d \sup_{t \in [z_i-\zeta, z_i + \zeta]} \left| \frac{d^2}{dt^2} \log \nd(t) \right|^2 =     \evwrt{Z \sim \nd(.)}{\sup_{t \in [Z-\zeta, Z+\zeta]} \left|\frac{d^2}{dt^2} \log \nd(t) \right|^{2}} < \infty \qquad \qquad \text{a.s.}.
\end{equation*}
Thereby, since $k_d < d^{1/2 - \epsilon}$, we have
\begin{equation}
  \label{eq:lim-kd-T2-d-zero-as}
  \lim_{d \rightarrow \infty} k_d T_2 = 0 \qquad \qquad \text{a.s.}.
\end{equation}
Combining this with \eqref{eq:k-T1-conv-prob-zero} and substituting
into~\eqref{eq:norm-infty-tx-T1-T2}, we realize that conditioned on $Y=+1$,
$k_d \snorm{\vtxd}_\infty$ converges to zero in probability as $d\rightarrow
\infty$. On the other hand, from Theorem~\ref{thm:std-err-sqrt-d}, we know that conditioned
on $Y=+1$, $\sum_{i=1}^d \txd_i$ converges in distribution to a normal $\mN(2c^2
\mI_{\nd}, 4c^2 \mI_{\nd})$. Consequently, we have
\begin{equation}
\label{eq:lim-sum-tx-tx-inf-normal-1}
\begin{aligned}
  \lim_{d \rightarrow \infty} \pr{\sum_{i=1}^d \txd_i \leq 8k_d \snorm{\vtxd}_\infty\Big|Y=+1} &= \lim_{d \rightarrow \infty} \pr{\sum_{i=1}^d \log \frac{\nd(z_i )}{\nd(z_i)+ 2 \mu_d} \leq 8 k_d \max_{1 \leq i \leq d} \left| \log \frac{\nd(z_i )}{\nd(z_i) + 2\mu_d} \right|} \\
  &= \pr{\mN(2c^2
    \mI_{\nd}, 4c^2 \mI_{\nd}) \leq 0} \\
  &= \bar{\Phi}(|c|\sqrt{\mI_{\nd}}).
\end{aligned}
\end{equation}

\underline{Conditioned on $Y=-1$}, we have
\begin{equation*}
  \txd_i = \log \frac{\nd(z_i - 2\mu_d)}{\nd(z_i)}.
\end{equation*}
Therefore,
\begin{align*}
  \pr{\sum_{i=1}^d \txd_i > -8k_d \snorm{\vtxd}_\infty\Big|Y=-1} &= \pr{\sum_{i=1}^d \log \frac{\nd(z_i - 2 \mu_d)}{\nd(z_i)} > -8 k_d \max_{1 \leq i \leq d} \left| \log \frac{\nd(z_i - 2\mu_d)}{\nd(z_i)} \right|} \\
  &= \pr{\sum_{i=1}^d \log \frac{\nd(z_i )}{\nd(z_i)- 2 \mu_d} < 8 k_d \max_{1 \leq i \leq d} \left| \log \frac{\nd(z_i )}{\nd(z_i) - 2\mu_d} \right|}
\end{align*}
Comparing this with~\eqref{eq:lim-sum-tx-tx-inf-normal-1}, we realize that by
replacing $c$ with $-c$ in the above discussion for $Y=+1$, we have
\begin{align*}
  \lim_{d\rightarrow \infty}   \pr{\sum_{i=1}^d \txd_i > -8k_d \snorm{\vtxd}_\infty\Big|Y=-1} &= \lim_{d \rightarrow \infty} \pr{\sum_{i=1}^d \log \frac{\nd(z_i )}{\nd(z_i)- 2 \mu_d} < 8 k_d \max_{1 \leq i \leq d} \left| \log \frac{\nd(z_i )}{\nd(z_i) - 2\mu_d} \right|} \\
  &= \pr{\mN(2c^2
    \mI_{\nd}, 4c^2 \mI_{\nd}) \leq 0} \\
  &= \bar{\Phi}(|c|\sqrt{\mI_{\nd}}).
\end{align*}
Combining this with~\eqref{eq:lim-sum-tx-tx-inf-normal-1} and substituting back in~\eqref{eq:ld-cdk-upper-bound-1}, we
get
\begin{equation*}
  \limsup_{d \rightarrow \infty} \roberrd_{\mu_d, \nd}(\mCd_{k_d}, k_d) \leq \bar{\Phi}(|c|\sqrt{\mI_{\nd}}),
\end{equation*}
which completes the proof.
\end{proof}

\editfinish


\section{Proof of Theorem~\ref{thm:converse}}
\label{app:converse-proof}

\newcommand{\xprv}{X_+}
\newcommand{\xmrv}{X_-}

Consider the set of all joint distributions of random variables $(X_+, X_-)$
where the marginal distribution of $X_+$ is the same as the distribution of
$Z+\mu_d$ where $Z \sim \nd(.)$, and the marginal distribution of $X_-$ is the
same as the distribution of $Z - \mu_d$. In other words, we consider the set of
all couplings of $Z + \mu_d$ and $Z - \mu_d$. 
In
fact, the marginal distribution of $\xprv$  is the same as that of a data sample
conditioned on $Y = +1$, and the marginal distribution of $X_-$ is the same as
that of a data samples conditioned on $Y = -1$.
Fix a maximal coupling  $(\xprv,
\xmrv)$ in this set, which is defined to
be a coupling that maximizes $\pr{X_+ = X_-}$, or equivalently  minimizes
$\pr{\xprv \neq \xmrv}$.\footnote{Note that $(Z+\mu_d, Z - \mu_d)$ where $Z \sim
  \nd(.)$ is probably not the optimal coupling since $\pr{X_+ \neq X_-} = 1$ unless
  $\mu_d = 0$.}
We use such a maximal coupling to design an effective strategy for the
adversary. Note that maximal coupling is intuitively relevant to adversarial
perturbations, since the adversary wants to change the data so that the samples
conditioned on $Y=+1$ and $Y=-1$ ``look almost the same'', so that the classifier
can extract minimal or no information about the true label upon observing adversarially
perturbed samples. 
Given such a maximal coupling, let
\begin{equation*}
  W :=
  \begin{cases}
    \xprv & \text{if } \xprv = \xmrv, \\
    0 & \text{otw}.
  \end{cases}
\end{equation*}
Moreover, let $Y \sim \text{Unif}(\pm 1 )$ be independent from $(X_+, X_-, W)$
and define
\begin{equation*}
  X :=
  \begin{cases}
    X_+ & Y = +1 \\
    X_- & Y = -1.
  \end{cases}
\end{equation*}
It is easy to verify that $(X,Y)$ have the same joint distribution as our true
feature vector-label pair, i.e.\ for $a \in \reals$ we have 
\begin{equation*}
  \pr{X \leq a | Y = +1} = \pr{X_+ \leq a} = \pr{Z + \mu_d \leq a},
\end{equation*}
and
\begin{equation*}
  \pr{X \leq a | Y = -1} = \pr{X_- \leq a} = \pr{Z - \mu_d \leq a}.
\end{equation*}




Keep in mind that the joint distribution of $(X_+, X_-, W, Y, X)$ depend on $\mu_d$ and hence on $d$. However, we do not make such a dependence
explicit to simplify the notation.



Note that by definition, $W$ is a function of $(X_+, X_-)$ and hence is
independent from $Y$. 
This suggests that $W$ can be considered as a good candidate for the
adversary's perturbation, since the adversary would ideally like to perturb the
data in a way that the information about the true label is removed. More
precisely, given the true label $y$ and data samples $(\xd_i: i \in [d])$, we
generate the modified data samples $\vwd = (\wdd_i: i \in [d])$ such that $\wdd_i$ are
conditionally independent conditioned on $y$ and $\xd_i$, and $\wdd_i$ is generated from the
law of $W$ conditioned on $Y=y$ and $X = \xd_i$. As we discussed above, $W$ is
independent from $Y$, hence  the modified samples $\wdd_i$ do not bear any information
about the label $y$, indicating that $\vwd$ is an ideal candidate for
the adversary. However $\snorm{\vwd - \vxd}_0$ might be above the
adversary's budget $k_d$. In order to address this, we define the final
perturbed data vector $\vxpd$ as follows:
\begin{equation}
  \label{eq:lower-bound-vxpd}
  \vxpd =
  \begin{cases}
    \vwd & \text{if } \snorm{\vwd - \vxd}_0  \leq k_d \\
    \vxd & \text{otw.}
  \end{cases}
\end{equation}
This ensures that indeed $\snorm{\vxpd - \vxd}_0 \leq k_d$. In fact, it turns
out that if $k_d \gg \sqrt{d}$, then $\snorm{\vwd - \vxd}_0 \leq k_d$ with high
probability. The following lemma  will be later useful to make this statement
precise. The proof of Lemma~\ref{lem:adv-modification-expectation} below is given at the end of this section.

\begin{lem}
  \label{lem:adv-modification-expectation}
  Assume that the Assumptions~\ref{assumption:Fisher} and~\ref{assumption:zeta-e-d2} are satisfied and $\mu_d = c
  / \sqrt{d}$ for some $c > 0$. Then for any
  $\delta > 0$ we have
  \begin{equation}
    \label{eq:lim-pr-W-not-X-Y-p-zero}
    \lim_{d \rightarrow \infty} \frac{\pr{W \neq X| Y = +1}}{d^{-\frac{1}{2} + \delta}} = 0,
  \end{equation}
  and
  \begin{equation}
    \label{eq:lim-pr-W-not-X-Y-m-zero}
    \lim_{d \rightarrow \infty} \frac{\pr{W \neq X| Y = -1}}{d^{-\frac{1}{2} + \delta}} = 0.
  \end{equation}
\end{lem}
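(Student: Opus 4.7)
The plan is to reduce $\{W \neq X\}$ to the event $\{X_+ \neq X_-\}$, identify the probability of the latter with the total variation distance between the shifted densities via the maximal-coupling property, and then bound that total variation distance through Pinsker's inequality combined with a Taylor expansion of $\log q$.

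First I would observe that $W = X_+$ on the event $\{X_+ = X_-\}$, and on the event $\{X_+ \neq X_-\}$ we have $W = 0$, so $W = X_+$ can only hold there on the null event $\{X_+ = 0\}$ (recall $q$ is a density). Since $(X_+, X_-, W)$ is constructed independently of $Y$, and $X = X_+$ on $\{Y = +1\}$, this yields $\pr{W \neq X \mid Y = +1} \leq \pr{X_+ \neq X_-}$, and an identical argument handles $Y = -1$. Because $(X_+, X_-)$ is a maximal coupling of the translated densities $q^{(d)}_+(z) := q(z - \mu_d)$ and $q^{(d)}_-(z) := q(z + \mu_d)$, the standard maximal-coupling identity gives $\pr{X_+ \neq X_-} = \dtv(q^{(d)}_+, q^{(d)}_-)$, and Pinsker's inequality then yields $\dtv(q^{(d)}_+, q^{(d)}_-)^2 \leq \tfrac{1}{2} D(q^{(d)}_+ \Vert q^{(d)}_-)$.

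Next I would bound the KL divergence. After the change of variable $u = z - \mu_d$, $D(q^{(d)}_+ \Vert q^{(d)}_-) = \int q(u) \log \frac{q(u)}{q(u + 2\mu_d)} du$. Using Taylor's theorem with integral remainder, $\log q(u + 2\mu_d) - \log q(u) = 2\mu_d (\log q)'(u) + \int_0^{2\mu_d} (2\mu_d - t)(\log q)''(u + t)\, dt$. The first-order term integrates to zero by Assumption~\ref{assumption:Fisher} (since $\ev{(\log q)'(Z)} = 0$); after exchanging the order of integration in the remainder, its absolute value is bounded by $2\mu_d^2 \cdot \ev{\sup_{s \in [Z - \zeta, Z + \zeta]} |(\log q)''(s)|}$ as soon as $d$ is large enough that $2\mu_d < \zeta$, and by Cauchy--Schwarz together with Assumption~\ref{assumption:zeta-e-d2} this expectation is a finite constant. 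Thus $D(q^{(d)}_+ \Vert q^{(d)}_-) = O(\mu_d^2) = O(1/d)$, so that $\pr{W \neq X \mid Y = \pm 1} = O(1/\sqrt{d}) = o(d^{-1/2 + \delta})$ for every $\delta > 0$, as required.

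The main technical point is the control of the Taylor remainder, for which local sup-integrability of $(\log q)''$ in a neighborhood of $Z$ is essential; this is exactly what Assumption~\ref{assumption:zeta-e-d2} supplies. The remaining ingredients --- the identification of the maximal coupling with total variation, Pinsker's inequality, and the vanishing of the first-order Taylor integral (an immediate consequence of $\int q' = 0$) --- are all off-the-shelf, so no additional conceptual input is needed beyond the Fisher-information regularity of Assumption~\ref{assumption:Fisher}.
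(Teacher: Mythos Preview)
Your proposal is correct and follows essentially the same route as the paper: reduce to $\pr{X_+\neq X_-}$, identify this with $\dtv(q^{(d)}_+,q^{(d)}_-)$ via maximal coupling, apply Pinsker, and control $D(q^{(d)}_+\Vert q^{(d)}_-)$ by a second-order Taylor expansion of $\log q$ whose first-order term vanishes by Assumption~\ref{assumption:Fisher} and whose remainder is $O(\mu_d^2)$ thanks to Assumption~\ref{assumption:zeta-e-d2} and Cauchy--Schwarz. The only cosmetic difference is that you use the integral form of the Taylor remainder, while the paper uses the Lagrange form; both yield the same $O(1/d)$ bound on the KL divergence and hence the $O(1/\sqrt{d})$ bound on $\pr{W\neq X\mid Y=\pm 1}$.
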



\begin{proof}[Proof of Theorem~\ref{thm:converse}]
  Assume that the adversary employs the above strategy to perturb the input
  samples. In order to obtain a lower bound for the optimal robust error
  $\optroberrd_{\mu_d, \nd(.)}(k_d)$, we consider any classifier $\mC$.
Let $I$ be the indicator of the event $\snorm{\vwd - \vxd}_0 > k_d$.
We 
  assume that the classifier knows adversary's strategy, and also observes $I$.
  This indeed makes the
 classifier stronger and results in a  lower bound for the robust error. 
Note that if $I=0$, we have $\vxpd = \vwd$ is independent from $y$, and no
classifier can do better than a random guess, resulting in an error $1/2$. In
other words,
\begin{equation*}
  \roberrd_{\mu_d, \nd(.)}(\mC, k_d) \geq \frac{1}{2} \pr{I = 0} = \frac{1}{4} \pr{I = 0 | Y = +1} + \frac{1}{4} \pr{I = 0 | Y = -1}.
\end{equation*}
Since this holds for any classifier $\mC$, we have
\begin{equation}
  \label{eq:converse-proof-optloss-arbit-C}
  \optroberrd_{\mu_d, \nd}(k_d) \geq \frac{1}{2} \pr{I = 0} = \frac{1}{4} \pr{I = 0 | Y = +1} + \frac{1}{4} \pr{I = 0 | Y = -1}.
\end{equation}
Let $I_i, 1 \leq i \leq d$ be the indicator that $\wdd_i \neq \xd_i$.
Using the Markov inequality, we have
\begin{align*}
  \pr{I = 1|Y = +1} &= \pr{\sum_{i=1}^d I_i> k_d | Y = +1} \\
                    &\leq \frac{d \pr{W \neq X | Y = +1}}{k_d} \\
                    &\leq \frac{d \pr{W \neq X | Y = +1}}{d^{1/2 + \epsilon}} \\
                    &= \frac{\pr{W \neq X | Y = +1}}{d^{-1/2 + \epsilon}},
\end{align*}
which goes to zero as $d \rightarrow \infty$ due to
Lemma~\ref{lem:adv-modification-expectation}. Equivalently, $\pr{I = 0| Y = +1}
\rightarrow 1$ as $d \rightarrow \infty$. Similarly, $\pr{I = 0| Y = -1}
\rightarrow 1$ as $d \rightarrow \infty$. Using these
in~\eqref{eq:converse-proof-optloss-arbit-C} we realize that
$\liminf_{d \rightarrow \infty} \optroberrd_{\mu_d, \nd}(k_d) \geq 1/2$  which
completes the proof.
\end{proof}


\begin{proof}[Proof of Lemma~\ref{lem:adv-modification-expectation}]
Let $p_+$ and $p_-$  the distribution of $X_+$ and $X_-$, respectively. The
total variation distance between $p_+$ and $p_-$ is defined to be
\begin{equation*}
  \dtv(p_+, p_-) := \sup_B |p_+(B) - p_-(B)|,
\end{equation*}
where the supremum is over all Borel sets in $\reals$. It is well known that
(see, for instance \cite[Lemma~8.1]{boucheron2013concentration})
if $(X_+, X_-)$ is the optimal
coupling that minimizes $\pr{X_+ \neq X_-}$, we have
\begin{equation*}
  \pr{X_+ \neq X_-} = \dtv(p_+, p_-).
\end{equation*}
We have
\begin{equation}
  \label{eq:converse-pr-W-X-Pinsker}
\begin{aligned}
  \pr{W \neq X | Y = +1} &\stackrel{(a)}{=} \pr{W \neq X_+ | Y= +1} \\
                         &\stackrel{(b)}{=} \pr{W \neq X_+} \\
                         &\stackrel{(c)}{\leq} \pr{X_+ \neq X_-} \\
                         &= \dtv(p_+, p_-) \\
  &\stackrel{(d)}{\leq} \sqrt{\frac{1}{2} D(p_+ \Vert p_-)},
\end{aligned}
\end{equation}
where $(a)$ uses the fact that by definition, conditioned on $Y = +1$, we have
$X = X_+$; in $(b)$ we use the fact that $Y$ is independent from $(X_+, X_-)$
and $W$ is a function of $(X_+, X_-)$; in $(c)$ we use the definition of $W$ to
conclude that if $X_+ = X_-$, we have $W = X_+$; and finally $(d)$ uses Pinsker
inequality (see, for instance, \cite[Theorem~4.19]{boucheron2013concentration}) where $D(p_+\Vert p_-)$
is the Kullback–Leibler (KL) divergence between $p_+$ and $p_-$. With an abuse
of notation, we may use $p_+$ and $p_-$ for the densities of $X_+$ and $X_-$,
respectively, so that $q_+(x) = \nd(x-\mu_d)$ and $q_-(x) = \nd(x+\mu_d)$.
Therefore,
\begin{equation}
  \label{eq:d+--int-q}
  \begin{aligned}
    D(q_+ \Vert q_-) &= \int_{-\infty}^{\infty} q_+(x) \log \frac{q_+(x)}{q_-(x)} dx \\
    &= \int_{-\infty}^{\infty} \nd(x-\mu_d) \log \frac{\nd(x-\mu_d)}{\nd(x+\mu_d)} dx \\
    &= \int_{-\infty}^{\infty} \nd(z) \log \frac{\nd(z)}{\nd(z+2\mu_d)} dz
  \end{aligned}
\end{equation}
Writing the Taylor expansion, we get
\begin{equation*}
  \log \nd(z+2\mu_d) = \log \nd(z) + 2\mu_d \frac{d}{dz} \log \nd(z) + 2\mu_d^2 \frac{d^2}{dz^2}  \log \nd(z+\epsilon_z),
\end{equation*}
where $|\epsilon_z| < 2 |\mu_d|$. Using this in~\eqref{eq:d+--int-q}, we get
\begin{equation}
  \label{eq:D+--taylor-substitution}
  \begin{aligned}
    D(q_+ \Vert q_-) &= \underbrace{- 2 \mu_d \int_{-\infty}^{\infty} \nd(z) \frac{d}{dz} \log \nd(z) dz}_{=:T_1} \\
    &\quad \underbrace{-2\mu_d^2 \int_{-\infty}^\infty \nd(z) \frac{d^2}{dz^2} \log \nd(z+\epsilon_z) dz}_{=:T_2}.
  \end{aligned}
\end{equation}
Observe that from Assumption~\ref{assumption:Fisher}, we have 
\begin{equation}
  \label{eq:converse-T1-zero}
  T_1 = -2 \mu_d\int_{-\infty}^\infty \nd'(z) dz = 0.
\end{equation}
Moreover,
\begin{equation*}
  |T_2| \leq \frac{2c^2}{d} \int_{-\infty}^\infty \nd(z) \left| \frac{d^2}{dz^2} \log \nd(z+\epsilon_z) \right| dz.
\end{equation*}
Since $|\epsilon_z| \leq 2 |\mu_d|$ and $\mu_d = c/\sqrt{d} \rightarrow 0$
as $d \rightarrow \infty$, for $d$ large enough we have $|\epsilon_z| < \zeta$
for all $z \in \reals$, where $\zeta$ is the constant in
Assumption~\ref{assumption:zeta-e-d2}.
Thereby, for $d$ large enough, we have
\begin{align*}
  |T_2| &\leq \frac{2c^2}{d} \int_{-\infty}^\infty \nd(z) \sup_{t \in [z-\zeta, z+\zeta]} \left| \frac{d^2}{dt^2} \log \nd(t)\right| dz \\
        &= \frac{2c^2}{d} \evwrt{Z \sim\nd(.)}{\sup_{t \in [Z-\zeta, Z+\zeta]} \left| \frac{d^2}{dt^2} \log \nd(t)\right|} \\
        &\leq \frac{2c^2}{d} \sqrt{\evwrt{Z \sim\nd(.)}{\sup_{t \in [Z-\zeta, Z+\zeta]} \left| \frac{d^2}{dt^2} \log \nd(t)\right|^2}} \\
        &=: \frac{\alpha}{d},
\end{align*}
where $\alpha$ is the resulting constant, which is finite from
Assumption~\ref{assumption:zeta-e-d2}. Using this together
with~\eqref{eq:converse-T1-zero} in~\eqref{eq:D+--taylor-substitution}, we
realize that for $d$ large enough, we have $D(q_+ \Vert q_-) \leq A/d$. Using
this in~\eqref{eq:converse-pr-W-X-Pinsker}, we realize that for $d$ large
enough, we have
\begin{equation*}
  \pr{W \neq X | Y = +1} \leq \sqrt{\frac{\alpha}{2d}},
\end{equation*}
which implies
\begin{equation*}
  \lim_{d \rightarrow \infty} \frac{\pr{W \neq X| Y = +1}}{d^{-\frac{1}{2} + \delta}} = 0.
\end{equation*}
The proof of~\eqref{eq:lim-pr-W-not-X-Y-m-zero} is similar. This completes the proof.
\end{proof}


\section{Proof of Theorem~\ref{thm:main-phase-transition}}
\label{app:main-phase-transition-proof}

Note that if $\limsup_{d \rightarrow \infty} \log_d k_d < 1/2$, there exists
$\epsilon>0$ such that for $d$ large enough, $\log_d k_d < 1/2 - \epsilon$, or
equivalently $k_d < d^{\frac{1}{2} - \epsilon}$. Therefore, the first part of
the theorem follows from Theorem~\ref{thm:upper-bound}. On the other hand, if
$\liminf_{d\rightarrow \infty} k_d > 1/2$, there exists $\epsilon > 0$ such
that for $d$ large enough, $\log_d k_d > 1/2 + \epsilon$, or equivalently, $k_d
> d^{\frac{1}{2} + \epsilon}$. Therefore, the second part of the theorem follows
from Theorem~\ref{thm:converse}.


\section{Proof of Theorem~\ref{thm:exp-poly}}
\label{app:exp-poly}

Here, we prove that all the
assumptions~\ref{assumption:Fisher}--~\ref{assumption:max-log-q-log} are
satisfied for the noise density $\nd(.)$ of the form~\eqref{eq:q-exp-poly}.
Before proving this, we need some lemmas. The proof of
Lemmas~\ref{lem:poly-sup}, \ref{lem:exp-poly-tail}, and \ref{lem:max-d-q-log-tail-bound} below are
given at the end of this section. 

\begin{lem}
  \label{lem:poly-sup}
  Assume that a degree $n$ polynomial $p : \reals \rightarrow \reals$ is given.
  Given $\epsilon > 0$, we define $\tp : \reals \rightarrow \reals$ as
  follows
  \begin{equation*}
    \tp(x) := \sup_{t \in [x-\epsilon,x+\epsilon]} |p(t)|.
  \end{equation*}
  Then, there exists a polynomial $r: \reals \rightarrow \reals$ with degree $n$,
  such that for all $x \in \reals$, we have $\tp(x) \leq r(|x|)$.
\end{lem}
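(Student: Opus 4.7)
The plan is to exploit the elementary fact that if $t \in [x-\epsilon, x+\epsilon]$, then $|t| \leq |x| + \epsilon$, and combine this with the triangle inequality applied monomial by monomial.

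First I would write the polynomial explicitly as $p(t) = \sum_{k=0}^{n} a_k t^k$ for some coefficients $a_0, \dots, a_n \in \reals$. Then, for any $t \in [x-\epsilon, x+\epsilon]$, the triangle inequality gives
\begin{equation*}
  |p(t)| \leq \sum_{k=0}^{n} |a_k| \, |t|^k \leq \sum_{k=0}^{n} |a_k| \, (|x| + \epsilon)^k,
\end{equation*}
where the second inequality uses that $|t| \leq |x| + \epsilon$ on the interval in question and the fact that $y \mapsto y^k$ is nondecreasing on $[0,\infty)$. Since this bound is uniform over $t \in [x-\epsilon,x+\epsilon]$, taking the supremum on the left yields $\tp(x) \leq \sum_{k=0}^{n} |a_k| (|x| + \epsilon)^k$.

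I would then define the candidate polynomial
\begin{equation*}
  r(y) := \sum_{k=0}^{n} |a_k| (y + \epsilon)^k,
\end{equation*}
which is a polynomial in the variable $y$ obtained by expanding each $(y+\epsilon)^k$ via the binomial theorem; its degree is exactly $n$, matching that of $p$. By construction, $\tp(x) \leq r(|x|)$ for all $x \in \reals$, which is the claim.

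There is essentially no obstacle here, as the argument is a straightforward triangle-inequality calculation. The only point worth being careful about is the claim that $\deg r = n$: one should observe that the coefficient of $y^n$ in $r(y)$ is $|a_n|$, which is nonzero because $p$ has degree exactly $n$, so no degree is lost when passing from $p$ to $r$.
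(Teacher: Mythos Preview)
Your proof is correct and in fact cleaner than the paper's. The paper argues via the critical points of $p$: it partitions $\reals$ into maximal intervals of monotonicity of $p$, treats separately the $x$ for which $[x-\epsilon,x+\epsilon]$ meets a critical point (bounding $\tp(x)$ there by a finite constant $\beta$ coming from compactness), and for the remaining $x$ uses monotonicity to bound $\tp(x)\le |p(x-\epsilon)|+|p(x+\epsilon)|$. Only at the very end does it apply the triangle inequality $|p(x\pm\epsilon)|\le\sum_i |a_i|(|x|+\epsilon)^i$, obtaining $r(|x|)=\beta+2\sum_i |a_i|(|x|+\epsilon)^i$.

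Your observation that $|t|\le |x|+\epsilon$ for every $t\in[x-\epsilon,x+\epsilon]$ lets you apply that same triangle-inequality step directly and uniformly, bypassing the monotone-interval decomposition and the compactness constant $\beta$ entirely. Both arguments land on a degree-$n$ polynomial in $|x|$ with leading coefficient proportional to $|a_n|$; yours just gets there with less machinery.
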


\begin{lem}
  \label{lem:exp-poly-tail}
  Given the noise density $\nd(.)$ as in~\eqref{eq:q-exp-poly}, there exists a
  constant $c_1 > 0$ such that for all $t \geq c_1$, if $Z$ is a random variable
  with law $\nd(.)$, we have 
  \begin{equation*}
    \pr{|Z| \geq t} \leq \frac{2}{n A a_{2n} t^{2n-1}} \exp\left( - \frac{a_{2n}}{2} t^{2n} \right).
  \end{equation*}
\end{lem}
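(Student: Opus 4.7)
The plan is to exploit the fact that for large $|z|$ the leading term $-a_{2n}z^{2n}$ in the polynomial $\psi(z)$ dominates all lower-order terms, so the density decays like a Gaussian-type super-exponential, and then apply the standard trick of pulling a factor $z^{2n-1}/t^{2n-1}$ inside the integral to obtain a closed-form bound via the antiderivative of $-n a_{2n} z^{2n-1} \exp(-\tfrac{a_{2n}}{2} z^{2n})$.

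First I would write
\begin{equation*}
\psi(z) = -a_{2n} z^{2n} + r(z), \qquad r(z) := a_{2n-1} z^{2n-1} + \dots + a_1 z + a_0,
\end{equation*}
and observe that $r(z)/z^{2n} \to 0$ as $|z| \to \infty$. Hence there exists $c_1 > 0$ such that $|r(z)| \leq \tfrac{a_{2n}}{2} z^{2n}$ for every $|z| \geq c_1$, which yields
\begin{equation*}
\psi(z) \leq -\tfrac{a_{2n}}{2} z^{2n} \qquad \text{for all } |z| \geq c_1,
\end{equation*}
and therefore $\nd(z) \leq \tfrac{1}{A}\exp(-\tfrac{a_{2n}}{2} z^{2n})$ on that region.

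Next, for $t \geq c_1$ I would bound the upper tail by the elementary estimate
\begin{equation*}
\int_t^\infty \exp\bigl(-\tfrac{a_{2n}}{2} z^{2n}\bigr)\,dz \;\leq\; \int_t^\infty \frac{z^{2n-1}}{t^{2n-1}} \exp\bigl(-\tfrac{a_{2n}}{2} z^{2n}\bigr)\,dz \;=\; \frac{1}{n a_{2n}\, t^{2n-1}} \exp\bigl(-\tfrac{a_{2n}}{2} t^{2n}\bigr),
\end{equation*}
where the last equality uses that $-\tfrac{1}{n a_{2n}} \exp(-\tfrac{a_{2n}}{2} z^{2n})$ is an antiderivative of $z^{2n-1} \exp(-\tfrac{a_{2n}}{2} z^{2n})$. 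The same argument applied to $\int_{-\infty}^{-t}$ (after the substitution $z \mapsto -z$, noting that $z^{2n}$ is even) yields the identical bound, so
\begin{equation*}
\pr{|Z| \geq t} \leq \frac{1}{A} \bigl(\text{upper tail integral}\bigr) + \frac{1}{A}\bigl(\text{lower tail integral}\bigr) \leq \frac{2}{n A a_{2n} t^{2n-1}} \exp\bigl(-\tfrac{a_{2n}}{2} t^{2n}\bigr),
\end{equation*}
which is exactly the claim.

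The only subtle point is choosing $c_1$ correctly: it must be large enough that simultaneously (i) $|r(z)| \leq \tfrac{a_{2n}}{2} z^{2n}$ for $|z| \geq c_1$ so the density bound kicks in, and (ii) $t \geq c_1$ so that the factor $z^{2n-1}/t^{2n-1} \geq 1$ used to pass from the pure exponential integral to the closed form is valid. Both conditions are easily met by taking $c_1$ to be the maximum of any real roots of $\tfrac{a_{2n}}{2} z^{2n} - |r(z)|$ and some positive constant; no deep argument is needed. I do not expect any real obstacle here — the entire proof is routine polynomial tail analysis.
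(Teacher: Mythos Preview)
Your proposal is correct and follows essentially the same route as the paper: bound $\psi(z)\le -\tfrac{a_{2n}}{2}z^{2n}$ for $|z|\ge c_1$ by leading-term domination, then insert the factor $z^{2n-1}/t^{2n-1}\ge 1$ to reduce the tail integral to an explicit antiderivative. Your symmetric treatment of the two tails via the substitution $z\mapsto -z$ is in fact slightly cleaner than the paper's version, which bounds the positive and negative tails with different constants before combining them.
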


  \begin{lem}
    \label{lem:max-d-q-log-tail-bound}
    Given the noise density $\nd(.)$ as in~\eqref{eq:q-exp-poly}, there exists a
    constant $c_2 > 0$ such that
    \begin{equation*}
      \lim_{d \rightarrow \infty} \pr{\max_{1 \leq i \leq d} |Z_i| > c_2 (\log d)^{\frac{1}{2n}}}  = 0,
    \end{equation*}
    where $(Z_i: i \geq 1)$ are i.i.d.\ random variables with law $\nd(.)$.
  \end{lem}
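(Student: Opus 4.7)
The plan is to reduce the statement to a direct union bound combined with the tail estimate already established in Lemma~\ref{lem:exp-poly-tail}. Since the $Z_i$ are i.i.d., for any threshold $t>0$ we have
\begin{equation*}
  \pr{\max_{1 \leq i \leq d} |Z_i| > t} \leq d \cdot \pr{|Z| > t},
\end{equation*}
so the task becomes finding a threshold $t_d$ of the form $c_2 (\log d)^{1/(2n)}$ for which the right-hand side vanishes as $d \to \infty$.

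First I would set $t_d := c_2 (\log d)^{1/(2n)}$ for a constant $c_2 > 0$ to be chosen later, and observe that $t_d \geq c_1$ for all $d$ large enough (where $c_1$ is the constant from Lemma~\ref{lem:exp-poly-tail}). Applying that lemma yields
\begin{equation*}
  \pr{|Z| > t_d} \leq \frac{2}{n A a_{2n} t_d^{\,2n-1}} \exp\!\left(-\frac{a_{2n}}{2} t_d^{\,2n}\right).
\end{equation*}
Since $t_d^{\,2n} = c_2^{2n} \log d$, the exponential factor simplifies to $d^{-a_{2n} c_2^{2n}/2}$, giving
\begin{equation*}
  d \cdot \pr{|Z| > t_d} \leq \frac{2}{n A a_{2n} (c_2 (\log d)^{1/(2n)})^{2n-1}} \cdot d^{\,1 - a_{2n} c_2^{2n}/2}.
\end{equation*}

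The final step is to pick $c_2$ large enough that the exponent of $d$ becomes negative, namely any $c_2 > (2/a_{2n})^{1/(2n)}$. For such $c_2$, both the polynomial-in-$\log d$ prefactor and the negative power of $d$ force the whole expression to $0$ as $d \to \infty$, which together with the union bound above gives the claim. There is no real obstacle here beyond the bookkeeping in choosing $c_2$; the tail estimate of Lemma~\ref{lem:exp-poly-tail} already contains the essential Gaussian-like concentration, and the scaling $(\log d)^{1/(2n)}$ is precisely what matches the $z^{2n}$ behavior of the exponent $\psi$.
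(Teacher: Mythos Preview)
Your proof is correct and follows essentially the same route as the paper: union bound, invoke Lemma~\ref{lem:exp-poly-tail} once $t_d \geq c_1$, then choose $c_2$ so that $a_{2n} c_2^{2n}/2 > 1$ (equivalently $c_2 > (2/a_{2n})^{1/(2n)}$) to make the resulting power of $d$ negative. The paper's argument is identical in structure and in the choice of $c_2$.
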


\begin{proof}[Proof of Theorem~\ref{thm:exp-poly}]
  As in~\eqref{eq:q-exp-poly}, let
  \begin{equation*}
      \nd(z) = \frac{\exp(\psi(z))}{A},
  \end{equation*}
where
\begin{equation*}
  \psi(z) = -a_{2n} z^{2n} + a_{2n-1} z^{2n-1} + \dots a_1 z + a_0,
\end{equation*}
is a polynomial in $z$ with even degree $2n > 0$  such that $a_{2n} > 0$. We
verify each of the four assumptions separately.

\underline{Assumption~\ref{assumption:Fisher}:}
It is straightforward to check that $\nd(z) > 0$ for all $z$ and $ \nd(.)$
is three times continuously differentiable. 
In order to verify~\eqref{eq:int-qp-qpp-zero}, note that
\begin{equation}
\label{eq:qpz-psip-exp}
  \nd'(z) = \psi'(z) \frac{\exp(\psi(z))}{A},
\end{equation}
and
\begin{equation*}
  \int_{-\infty}^\infty |\nd'(z)| dz = \frac{1}{A} \int_{-\infty}^\infty |\psi'(z)| \exp{(\psi(z))} dz < \infty,
\end{equation*}
where the last step follows from the fact that $\psi'(z)$ is a polynomial in $z$
and $\psi(z)$ is a polynomial with even degree and positive leading coefficient.
This implies that
\begin{equation*}
  \int_{-\infty}^\infty \nd'(z) = \lim_{a \rightarrow \infty} \int_{-a}^a \nd'(z) dz = \lim_{a \rightarrow \infty} \nd(a) - \nd(-a) = 0,
\end{equation*}
since $\nd(z) \rightarrow 0$ as $z \rightarrow \infty$ or $z \rightarrow -\infty$.
Furthermore, 
\begin{equation*}
  \nd''(z) = \left( \psi''(z) + (\psi'(z))^2 \right) \frac{\exp(\psi(z))}{A}.
\end{equation*}
Since $(\psi''(z) + (\psi'(z))^2)$ is a polynomial in $z$, similar to the above
we have $\int_{-\infty}^\infty |\nd''(z)| dz < \infty$. Additionally, it is
evident from~\eqref{eq:qpz-psip-exp} that
$\nd'(z) \rightarrow 0$ as $z\rightarrow \infty$ or $z \rightarrow -\infty$.
Therefore, 
we get $\int_{-\infty}^\infty \nd''(z)dz = 0$ similar to the above. This
establishes~\eqref{eq:int-qp-qpp-zero}.

On the other hand, for the family of
densities $\nd(z;\theta) = \nd(z - \theta)$, we have
\begin{equation*}
  \frac{\partial}{\partial \theta} \log \nd(z; \theta) = \frac{\partial}{\partial \theta} \left( \psi(z - \theta) - \log A \right) = - \psi'(z - \theta).
\end{equation*}
Hence, recalling the definition of the Fisher information, we have
\begin{equation*}
  \mI(\theta) := \evwrt{z \sim \nd(z;\theta)}{\left( \frac{\partial}{\partial \theta} \log \nd(z;\theta) \right)^2 } = \int_{-\infty}^\infty (\psi'(z - \theta))^2 \frac{\exp(\psi(z - \theta))}{A} d z < \infty,
\end{equation*}
since $(\psi'(z - \theta))^2$ is a polynomial in $z$. This means that the above
quantity is well defined and finite, and hence the Fisher information
$\mI(\theta)$ is well-defined and finite for all $\theta$.

\underline{Assumption~\ref{assumption:zeta-d3}} Note that $\frac{d^3}{dt^2} \log
\nd(t)$ is a polynomial in $t$, therefore Lemma~\ref{lem:poly-sup} implies that
for $\zeta > 0$, there exists a polynomial $r: \reals \rightarrow \reals$ such
that
\begin{equation*}
  \sup_{t \in [Z-\zeta, Z+\zeta]} \left |\frac{d^3}{dt^3} \log \nd(t) \right | \leq r(|Z|).
\end{equation*}
Therefore, since all the moments of $\nd(.)$ are finite, and $r(.)$ is a
polynomial, the expectation of the left hand side is finite.

\underline{Assumption~\ref{assumption:zeta-e-d2}} Similar to the above case,
since $\sup_{t \in [Z-\zeta, Z+\zeta]} |\frac{d^2}{dt^2} \log \nd(t) |^{2}$ is
bounded by a polynomial and all the finite moments of $\nd(.)$ are finite, the
expectation is indeed finite.

\underline{Assumption~\ref{assumption:max-log-q-log}}  Note that
  \begin{equation*}
    \frac{d}{dz} \log \nd(z) = \psi'(z) = -2n a_{2n} z^{2n-1} + \dots + a_1.
  \end{equation*}
  Therefore, for all $z \in \reals$, 
  \begin{equation*}
    \left| \frac{d}{dz} \log \nd(z) \right| \leq \sum_{i=1}^{2n} |ia_i| |z|^{i-1},
  \end{equation*}
  and for all $(z_i: i \in [d])$, 
  \begin{equation*}
    \max_{1 \leq i \leq d} \left| \frac{d}{dz} \log \nd(z_i) \right| \leq \sum_{i=1}^{2n} |ia_i| \left( \max_{1 \leq i \leq d} |z_i| \right)^{i-1}.
  \end{equation*}
  Note that if $\max_{i \leq i \leq d} |z_i| \leq c_2 (\log d)^{1/2n}$ with
  $c_2$ being the constant from Lemma~\ref{lem:max-d-q-log-tail-bound}, then
  \begin{equation*}
    \max_{1 \leq i \leq d} \left| \frac{d}{dz} \log \nd(z_i) \right| \leq \sum_{i=1}^{2n} |i a_i| (\log d)^{\frac{i-1}{2n}}.
  \end{equation*}
  Observe that there exists a constant $C_4 > 0$ such that for $d$ large enough,
  we have
  \begin{equation*}
    \sum_{i=1}^{2n} |ia_i| (\log d)^{\frac{i-1}{2n}} \leq C_4 (\log d)^{\frac{2n-1}{2n}} = C_4 (\log d)^{1 - 1/2n}.
  \end{equation*}
  Combining this with the above argument, we realize that for $d$ large enough
  \begin{equation*}
    \pr{\max_{1 \leq i \leq d} \left| \frac{d}{dz} \log \nd(z_i) \right| > C_4 (\log d)^{1 - 1/2n}} \leq \pr{\max_{1 \leq i \leq d} |z_i| > c_2(\log d)^{1/2n}},
  \end{equation*}
  which converges to zero as $d \rightarrow \infty$  from
  Lemma~\ref{lem:max-d-q-log-tail-bound}. This means that
  Assumption~\ref{assumption:max-log-q-log} holds with $C_4$ as above and
  $\gamma = 1-1/2n$.
\end{proof}

\begin{proof}[Proof of Lemma~\ref{lem:poly-sup}]
Let $p(x) = a_n x^n + \dots + a_1 x + a_0$. Let $p'(.)$ be the derivative of $p(.)$.
Since $p'(.)$ is a polynomial of degree $n-1$, it has at most $n-1$ real roots.
Consequently, there exist $-\infty = \alpha_0 < \alpha_1 < \alpha_2 < \dots <
\alpha_{m-1} < \alpha_m = \infty$ where $m \leq n$ and $p(.)$ is monotone in
$[\alpha_i, \alpha_{i+1}]$ for $0 \leq i < m$. Let
\begin{equation*}
  A := \bigcup_{i=1}^{m-1} [\alpha_i - \epsilon, \alpha_i+\epsilon].
\end{equation*}
Note that if $x \notin A$, $p(.)$ is monotone in $[x-\epsilon,x+\epsilon]$. Hence,
\begin{equation}
  \label{eq:x-notin-A-ptilde-bound}
  \tp(x) \leq \max\{|p(x-\epsilon)|, |p(x+\epsilon)|\} \leq |p(x-\epsilon)| + |p(x+\epsilon)| \qquad \forall x \notin A.
\end{equation}
Furthermore, let
\begin{equation*}
  B := \bigcup_{i=1}^{m-1} [\alpha_i - 2\epsilon, \alpha_i + 2\epsilon].
\end{equation*}
Note that $B$ is a compact set, and $p(.)$ is continuous. Therefore, we may
define
\begin{equation*}
  \beta := \max_{x \in B} |p(x)|,
\end{equation*}
and $\beta < \infty$. Since for $x \in A$, we have $[x-\epsilon, x+\epsilon] \subset B$,
we may write
\begin{equation}
  \label{eq:x-in-A-ptilde-bound}
  \tp(x) = \sup_{t \in [x-\epsilon,x+\epsilon]} |p(t)| \leq \sup_{t \in B} |p(t)| = \beta \qquad \forall x \in A.
\end{equation}
Combining this with~\eqref{eq:x-notin-A-ptilde-bound}, we realize that for all
$x \in \reals$, we have
\begin{align*}
  \tp(x) &\leq |p(x-\epsilon)| + |p(x+\epsilon)| + \beta \\
         &\leq \beta + \sum_{i=0}^n |a_i|(|x-\epsilon|^i + |x+\epsilon|^i)  \\
         &\leq \beta + \sum_{i=0}^n 2|a_i|(|x|+|\epsilon|)^i \\
  &=: r(|x|),
\end{align*}
where $r(.)$ is a polynomial of degree $n$. This completes the proof.
\end{proof}

  \begin{proof}[Proof of Lemma~\ref{lem:exp-poly-tail}]
Recalling the polynomial form of $\psi(.)$ and the assumption that $a_{2n} > 0$,
we realize that there exists $c_1 > 0$ such that if $z > c_1$, we have
\begin{equation}
\label{eq:psi-z-large-1/2-2}
  -2a_{2n} z^{2n} \leq \psi(z) \leq -\frac{a_{2n}}{2} z^{2n},
\end{equation}
and if $z < -c_1$, we have
\begin{equation}
\label{eq:psi-z-negative-large-2-1/2}
  -\frac{a_{2n}}{2} z^{2n} \leq \psi(z) \leq -2a_{2n} z^{2n}.
\end{equation}
Thereby, if $t \geq c_1$, we have
\begin{equation}
  \label{eq:z-t-inf-bound}
\begin{aligned}
  \pr{Z \geq t} &= \int_{t}^\infty \frac{1}{A} \exp\left( \psi(z) \right) dz \\
             &\leq \frac{1}{A} \int_t^\infty \exp\left( -\frac{a_{2n}}{2} z^{2n} \right) dz \\
             &= \frac{1}{A} \int_t^\infty \frac{na_{2n}z^{2n-1}}{na_{2n}z^{2n-1}} \exp\left( -\frac{a_{2n}}{2} z^{2n} \right) dz \\
             &\leq \frac{1}{nAa_{2n} t^{2n-1}}\int_t^\infty -\frac{d}{d z} \exp\left(  -\frac{a_{2n}}{2} z^{2n}\right) dz \\
  &= \frac{1}{nA a_{2n}t^{2n-1}} \exp\left( -\frac{a_{2n}}{2} t^{2n} \right).
\end{aligned}
\end{equation}
Similarly, using~\eqref{eq:psi-z-negative-large-2-1/2},  for $t \geq c_1$, we  may write
\begin{equation}
  \label{eq:z--inf--t-bound}
\begin{aligned}
  \pr{Z \leq -t} &= \int_{-\infty}^{-t} \frac{1}{A} \exp\left( \psi(z) \right) dz \\
             &\leq \frac{1}{A} \int_{-\infty}^{-t} \exp\left( -2a_{2n}z^{2n} \right) dz \\
             &= \frac{1}{A} \int_{-\infty}^{-t} \frac{4na_{2n}z^{2n-1}}{4na_{2n}z^{2n-1}} \exp\left( -2a_{2n} z^{2n} \right) dz \\
             &\leq \frac{-1}{4nAa_{2n} t^{2n-1}}\int_{-\infty}^{-t} -\frac{d}{d z} \exp\left(  -2a_{2n} z^{2n}\right) dz \\ 
              &= \frac{1}{4nA a_{2n}t^{2n-1}} \exp\left( -2a_{2n} t^{2n} \right)\\
             &\leq \frac{1}{nA a_{2n}t^{2n-1}} \exp\left( -\frac{a_{2n}}{2} t^{2n} \right).
\end{aligned}
\end{equation}
Combining  \eqref{eq:z-t-inf-bound} and \eqref{eq:z--inf--t-bound} and using the
union bound, we arrive at the desired result. 
  \end{proof}

  \begin{proof}[Proof of Lemma~\ref{lem:max-d-q-log-tail-bound}]
    Since $a_{2n} > 0$, we may choose $c_2$ large enough so that
    \begin{equation}
      \label{eq:c2-condiiton-1}
      \frac{a_{2n}}{2} c_2^{2n} > 1.
    \end{equation}
    Using the union bound, we get
    \begin{equation}
      \label{eq:max-zi-union-bound-1}
      \pr{\max_{1 \leq i \leq d} |Z_i| > c_2 (\log d)^{\frac{1}{2n}}} \leq d \pr{|Z| \geq c_2 (\log d)^{\frac{1}{2n}}},
    \end{equation}
    where $Z \sim \nd(.)$. Using Lemma~\ref{lem:exp-poly-tail}, if $d$ is large enough so that $c_2 (\log
    d)^{\frac{1}{2n}} > c_1$, we have
    \begin{equation}
      \label{eq:pr-z-large-c-log2-1-2n-bound}
      \pr{|Z| \geq c_2 (\log d)^{\frac{1}{2n}}} \leq \frac{2}{nA a_{2n} c_2^{2n-1} (\log d)^{\frac{2n-1}{2n}}}\exp \left( -\frac{a_{2n}}{2} c_2^{2n} \log d  \right).
    \end{equation}
    Using~\eqref{eq:pr-z-large-c-log2-1-2n-bound}
    in~\eqref{eq:max-zi-union-bound-1}, we realize that for $d$ large enough, 
    \begin{align*}
      \pr{\max_{1 \leq i \leq d} |Z_i| > c_2 (\log d)^{\frac{1}{2n}}} &\leq  \frac{2d}{nA a_{2n} c_2^{2n-1} (\log d)^{\frac{2n-1}{2n}}}\exp \left( -\frac{a_{2n}}{2} c_2^{2n} \log d  \right) \\
      &= \frac{2}{nA a_{2n} c_2^{2n-1} (\log d)^{\frac{2n-1}{2n}}}\exp \left( -\left[ \frac{a_{2n}}{2} c_2^{2n}-1  \right] \log d  \right),
    \end{align*}
    which goes to zero as $d \rightarrow \infty$ due
    to~\eqref{eq:c2-condiiton-1}. This completes the proof.
  \end{proof}


\fi

\end{document}